\documentclass{article}

\PassOptionsToPackage{numbers,square}{natbib}

\usepackage[main, final]{neurips_2026}

\usepackage[utf8]{inputenc} % allow utf-8 input
\usepackage[T1]{fontenc}    % use 8-bit T1 fonts
\usepackage[hidelinks]{hyperref}       % hyperlinks
\usepackage{url}            % simple URL typesetting
\usepackage{booktabs}       % professional-quality tables
\usepackage{amsfonts}       % blackboard math symbols
\usepackage{nicefrac}       % compact symbols for 1/2, etc.
\usepackage{microtype}      % microtypography
\usepackage[table]{xcolor}
\usepackage{graphicx}
\usepackage{eso-pic} % First-page institutional logo for the arXiv edition.
\usepackage{subcaption}
\usepackage{multirow}
\usepackage{wrapfig}
\usepackage{rotating}

\newcommand{\hl}[1]{\cellcolor{red!8}{#1}}
\newcommand{\dec}[1]{\textcolor{blue}{\scriptsize$\downarrow$#1\%}}
\newcommand{\best}[1]{\textcolor{red}{\textbf{#1}}}
\newcommand{\worst}[1]{\textcolor{blue}{#1}}
\newcommand{\inc}[1]{\textcolor{red}{\scriptsize$\uparrow$#1\%}}

\usepackage{amsmath}
\usepackage{amssymb}
\usepackage{mathtools}
\usepackage{amsthm}

\theoremstyle{plain}
\newtheorem{theorem}{Theorem}[section]

\theoremstyle{definition}

\theoremstyle{remark}

\usepackage[textsize=tiny]{todonotes}

\title{Data-Adaptive Mahalanobis Metric Learning for Cross-Head Attention in Transformers}

\author{%
  \textbf{Zhenke Duan}$^{1,2}$ \quad
  \textbf{Xin Li}$^{1}$ \quad
  \textbf{Jiqun Pan}$^{1}$ \\[0.3em]
  \textbf{Dong Xiaofei}$^{1}$ \quad
  \textbf{Hanwen Ning}$^{1,4}$\thanks{Corresponding author.} \quad
  \textbf{Xinyuan Song}$^{3}$ \\[0.8em]
  $^{1}$ School of Statistics and Mathematics, \\
  Zhongnan University of Economics and Law, Wuhan, China \\[0.3em]
  $^{2}$ CodeSoul.Co, Hangzhou, China \\[0.3em]
  $^{3}$ Department of Statistics, \\
  The Chinese University of Hong Kong, Hong Kong SAR, China \\[0.3em]
  $^{4}$ Innovation and Talent Base for Digital Technology and Finance, ZUEL
  \\[0.5em]
}

\begin{document}

% arXiv edition: logo aligns with the text block and does not change pagination.
\AddToShipoutPictureFG*{%
  \AtPageUpperLeft{%
    \hspace*{\dimexpr(\paperwidth-\textwidth)/2\relax}%
    \raisebox{-0.92in}[0pt][0pt]{%
      \includegraphics[width=2.8cm]{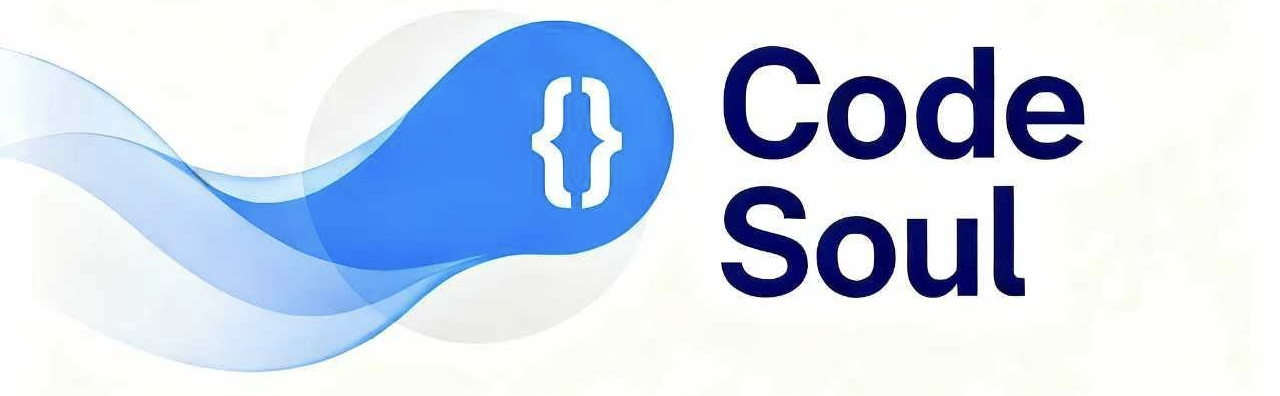}%
    }%
  }%
}

\maketitle

\begin{abstract}
In this paper, we propose Data-Adaptive Mahalanobis Cross-Head Attention (DAMCHA), a simple and effective attention mechanism that generalizes Multi-Head Attention (MHA) through the lens of metric learning. Standard MHA implicitly learns a static, head-wise Euclidean similarity corresponding to the diagonal blocks of a unified cross-head metric matrix, discarding all off-diagonal interactions across heads. DAMCHA addresses this limitation via two key innovations. First, it learns the full cross-head metric matrix, inducing a Mahalanobis-like similarity that explicitly captures inter-head interactions beyond the restricted block-diagonal structure of MHA. Second, it parameterizes the metric matrix as an input-dependent function, enabling the attention geometry to dynamically adapt to the intrinsic manifold structure of the data rather than remaining fixed after training. We further introduce principled regularization strategies, most notably stack-wise parameter sharing,
to ensure computational efficiency and stable optimization.
DAMCHA serves as a drop-in replacement for MHA and its variants in various Transformer architectures, delivering superior expressive power and learning performance with comparable or reduced computational overhead. Extensive experiments demonstrate that DAMCHA-based Transformers consistently outperform competing baselines across a range of benchmark tasks.  Code is available at \href{https://github.com/erwinmsmith/DAMCHA}{https://github.com/erwinmsmith/DAMCHA}.

\end{abstract}

\section{Introduction}

MHA is a core block of Transformer architecture \cite{vaswani2017attention,michel2019sixteen,liu2021multi}. MHA essentially learns a set of similarity matrices induced by a bilinear form (usually denoted as $M_{i,i}=W_Q^{(i)}(W_K^{(i)})^\top$) through a query--key mechanism \cite{ruan2024understanding,shen2022effects,kim2018bilinear}. These learned head-wise metrics are used to compute attention scores, which capture token matching intensities across disparate representation subspaces  \cite{michel2019sixteen,bian2021attention,liu2025multihead}. 
This formulation naturally aligns MHA with metric learning, where discriminative similarity structures are learned to support downstream tasks.\cite{shen2010scalable,liu2020mahalanobis,zhao2025why,zhong2025timevlm,zhang2023a}.

Despite its extensive successes in diverse domains \cite{galassi2021attention,chaudhari2021attentive,khan2022transformers}, when examined through this lens, standard MHA exhibits several inherent structural deficiencies that limit its expressive power. First, it assumes conditional independence between heads for a given input, where similarities are defined solely within each head and merely concatenated at the output. This ignores the cross-head synergies and higher-order interactions corresponding to $M_{i,j}=W_Q^{(i)}(W_K^{(j)})^\top$ for $i \neq j$, making it difficult to represent coupled geometries between complex feature subspaces  \cite{merullo2024talking,cordonnier2021multihead,xiao2024improving}. Second, the similarity metrics $M_{i,i}$ remain static post-training, which implies that, during inference, the attention cannot adapt to the input data, failing to match the non-linear and locally anisotropic manifold structures prevalent in real-world data. In addition, because $M_{i,i}$ relies on a factorized formulation, its rank is inherently capped by the projection dimension, introducing a well-known low-rank bottleneck that limits representation capacity  \cite{zhang2022mixhead,guo2019low,bhojanapalli2020low}. These issues are especially problematic in tasks where the geometry of the data distribution is critical  \cite{karkaria2025asno,yang2024enhancing}.

\par
In this paper, we propose a paradigm shift: rather than viewing the attention as a collection of independent, dot-product, head-wise matrices, it should be understood as a single, learnable Mahalanobis-like metric defined over the joint head space (Here, the term Mahalanobis-like is used to emphasize that our formulation can fully exploit off-diagonal blocks; we do not require the resulting Mahalanobis-like metric matrix to be positive definite). In contrast, standard MHA corresponds to a restrictive block-diagonal constraint that discards all off-diagonal couplings, thereby enforcing an effectively Euclidean similarity measure. From a learning perspective, MHA learns only a global, input-independent similarity matrix that approximates an average attention geometry across the dataset. Guided by this high-level interpretation, we propose a novel attention mechanism, termed DAMCHA. DAMCHA introduces two primary innovations.
\emph{Structural Expansion}: It eliminates structural independence by explicitly learning a full cross-head metric. While its diagonal blocks recover conventional within-head similarities, the off-diagonal blocks encode how different heads should cooperate, compensate, or modulate one another, expanding attention from an additive mixture of isolated similarities to a genuine Mahalanobis-like metric.
 \emph{Dynamic Adaptivity}: Unlike MHA which keeps underlying parameters static, DAMCHA parameterizes the similarity matrix by a learnable function of the input. This allows the model to dynamically select appropriate geometries for different contexts, aligning the attention operator with the local structure of the data manifold. Several prior works have sought to improve upon MHA through head interaction or geometric enrichment, including THA \cite{shazeer2020talking}, DCMHA \cite{xiao2024improving}, ColMHA \cite{cordonnier2020multi}, MoA \cite{zhang2022mixture}, and MMA \cite{cui2019mixed}. However, none of these approaches is grounded in metric learning, and none jointly addresses cross-head interaction and input-adaptive metric learning within a unified framework.

Practically, DAMCHA yields benefits beyond expressiveness: (i) improved modeling of complex dependencies spanning multiple semantic factors (e.g., content--position or syntax--semantics); (ii) enhanced adaptability under distributional shifts or heterogeneous inputs, as the attention geometry reconfigures itself to match the current context; (iii) by directly learning the similarity matrices $M_{i,j}$s through a functional mapping rather than the conventional linear query-key projections, DAMCHA naturally bypasses the low-rank bottleneck. We further introduce principled regularization strategies, most notably stack-wise parameter sharing, to control model complexity. Since the attention matrices in DAMCHA are generated by a compact metric learner, the overall parameter count remains comparable to standard MHA despite the expanded metric structure. This design ensures that the gains in expressiveness come at no significant cost in computational overhead or memory footprint.

Experimental results show that DAMCHA outperforms Transformers across a variety of architectures (e.g. GPT-2) and scales. In language modeling, DAMCHA is particularly superior. Notably, DAMCHA yields improvements in token-level pretraining; these gains are even stronger on mixed datasets, which are inherently cross-view. We further validate DAMCHA on autoregressive, decoder-only Transformers as well as Vision Transformers in generative and discriminative tasks. We provide extensive simulation experiments and ablation studies to elucidate the mechanisms of DAMCHA.

\section{Related Work}

The Transformer relies on self-attention \cite{vaswani2017attention}, where multi-head attention projects queries and keys into parallel subspaces \cite{michel2019sixteen}. Empirical studies show heads often converge to similar or near-identical patterns \cite{bian2021attention,li2023does,jo2025gmar}, so adding heads does not guarantee more diverse representations \cite{michel2019sixteen}. This arises from the block-diagonal metric structure, which excludes cross-head interactions, inducing low-rank bottlenecks \cite{guo2019low,liu2025multihead} and redundant convergence \cite{bian2021attention}.  
Existing remedies, including cross-head interactions or output-level mixing and gating \cite{merullo2024talking,han2025research,xiao2024improving,zhang2022mixhead,lan2025gateformer}, operate post-hoc on attention weights or outputs rather than similarity formation, modifying attention utilization without altering the underlying Euclidean dot-product metric and leaving the core structural limitation unresolved.  

From a metric learning perspective, attention can be interpreted as a parameterized similarity metric \cite{kim2018bilinear}, and existing metric-aware designs focus on domain-specific settings \cite{wang2019deep,liu2020dependency,tay2018latent,yang2025difference}, neglecting multi-head structural features. There is currently no unified framework that integrates Mahalanobis metric learning with multi-head attention to jointly model both intra- and cross-head similarities, leaving open the question of whether similarity metric should adapt dynamically to the input or remain fixed after training \cite{karkaria2025asno,huang2024how}.

\section{DAMCHA}
\label{sec:damcha}
\subsection{Cross-Head Metric Matrix as a Conceptual Formulation}

Let \(X\in\mathbb{R}^{L\times d_{\mathrm{model}}}\) denote an input sequence
of length \(L\), where \(d=d_{\mathrm{model}}=h\cdot d_h\), \(h\) is the number of
attention heads, and \(d_h\) is the hidden dimension of each head. In standard
multi-head attention (MHA), the \(i\)-th head computes query and key
representations as $Q_i=XW_Q^{(i)}$ and $ K_i=XW_K^{(i)}$, where
$W_Q^{(i)},W_K^{(i)}\in\mathbb{R}^{d_{\mathrm{model}}\times d_h}$.
The corresponding pre-softmax score matrix is
\begin{equation}
S_i^{\mathrm{MHA}}(X)
=
\frac{1}{\sqrt{d_h}}Q_iK_i^\top
=
\frac{1}{\sqrt{d_h}}
XW_Q^{(i)}\bigl(W_K^{(i)}\bigr)^\top X^\top .
\label{eq:mha-score-main}
\end{equation}
Thus, from a metric-learning perspective, each head learns an independent
bilinear similarity metric $M_{i,i}=W_Q^{(i)}\bigl(W_K^{(i)}\bigr)^\top $.
Equivalently, standard MHA can be interpreted as a block-diagonal metric
structure over the joint head space: $M_{\mathrm{MHA}}=\operatorname{diag}\bigl(M_{1,1},M_{2,2},\ldots,M_{h,h}\bigr)$.
This block-diagonal structure implicitly assumes that the similarity geometry
of each head is formed independently, without explicitly modeling metric-level
interactions between different heads.
To describe possible interactions beyond this restriction, we introduce the
Cross-Head Metric Matrix (CHM):
\begin{equation}
M =
\begin{pmatrix}
M_{1,1} & \cdots & M_{1,h} \\
\vdots  & \ddots & \vdots  \\
M_{h,1} & \cdots & M_{h,h}
\end{pmatrix},
\label{eq:chm-main}
\end{equation}
where each block \(M_{i,j}\) represents a potential metric-level interaction
between the query subspace associated with head \(i\) and the key subspace
associated with head \(j\). When all off-diagonal blocks vanish, CHM reduces to
the block-diagonal metric structure of standard MHA. When off-diagonal blocks
are allowed, attention is no longer restricted to independent head-wise
similarity measurements.

\subsection{Adaptive Learning of CHM}
\par
Let $X \in \mathbb{R}^{L \times d}$ denote
the input token, and let each metric block satisfy $M_{i,j}(X) \in \mathbb{R}^{d_h \times d_h}$. In MHA, all $M_{i,i}$s are fixed after training, which implies
that the same projection matrices are applied to all inputs $X$ in inference process.
Although attention score $X M_{i,i} X^\top$ vary with the input contents,
the underlying metric $M_{i,i}$ itself is input-independent.
In other words, MHA learns a static metric that is globally shared across all contexts.
This static design may be suboptimal in practice.
For instance, the relevance between two features may be strong in one context but weak in another \cite{xiao2024improving,koyuncu2024eprotran,chen2025dynamic,liu2019attentioncontrolmetriclearning},
yet a fixed metric cannot adapt its attention geometry accordingly.

To increase the expressiveness of the attention mechanism, we parameterize CHM as an input-dependent function. Concretely, a learnable forward network $f_\theta(\cdot)$ (e.g.,
an MLP or a hyper-network) is introduced to map $X$ to an adaptive CHM, which follows
\begin{eqnarray}
f_\theta(X) \rightarrow
M(X) =
\begin{pmatrix}
M_{1,1}(X) & \cdots & M_{1,h}(X) \\
\vdots     & \ddots & \vdots     \\
M_{h,1}(X) & \cdots & M_{h,h}(X)
\end{pmatrix}.
\end{eqnarray}
Under this formulation, each cross-head metric block $M_{i,j}(X)$ is no longer a matrix with constant parameter,
but is dynamically generated conditioned on the input $X$. Further discussions on the advantages of our method are provided in Appendix~\ref{app:Appendix A}.

\subsection{Attention Mechanism based on CHM}
\par
We propose DAMCHA, a novel attention mechanism built upon CHM. We assume that $M(X)$ has been generated by $f_{\theta}(\cdot)$ conditioned on the input sequence. 
DAMCHA consists of $h$ attention heads. For a given input $X$, we first define the \emph{row-summed head metric} for the $i$-th head as
\begin{equation}
\boldsymbol{m}_i(X)
\;=\;
\mathrm{RowSum}(M, i)
\;=\;
\sum_{j=1}^{h} M_{i,j}(X).
\end{equation}
$X$ is first linearly transformed to match the required dimensionality, Then, using $\boldsymbol{m}_i(X)$, DAMCHA computes the score matrix for the $i$-th head via a generalized quadratic form:
\begin{equation}
S_i(X)
\;=\;
\frac{1}{\sqrt{d_k}}\,
X \, \boldsymbol{m}_i(X) \, X^\top
\;\in\;
\mathbb{R}^{L \times L},
\end{equation}
where $i=1,2,\ldots,h$, and $d_k$ is set equal to the per-head dimension $d_h$,
consistent with the scaling in conventional MHA. Then, the attention matrix is obtained by applying a  softmax operation:
\begin{eqnarray}
A_i(X) = \operatorname{Softmax}\big(S_i(X)\big) \in \mathbb{R}^{L \times L}.
\end{eqnarray}
For each head $i \in \{1,\dots,h\}$, we compute value representations as $V_i = X W_V^{(i)}$ with projection
$W_V^{(i)} \in \mathbb{R}^{d \times d_v}$. The output of the $i$-th head is given by
\begin{equation}
O_i(X) = A_i(X)\, V_i \in \mathbb{R}^{L \times d_v},
\end{equation}
where $A_i(X)$ denotes the attention weight and $V_i$ is the corresponding value representation.
The outputs from all heads are concatenated to form a joint output denoted as $O_{\mathrm{cat}}(X)$, given by 
\begin{eqnarray}
O_{\mathrm{cat}}(X)
=
\operatorname{Concat}\bigl(
O_1(X), \ldots, O_h(X)
\bigr).
\end{eqnarray}
Then, $O_{\mathrm{cat}}(X)\in \mathbb{R}^{L \times (h d_v)}$ is mapped back to the original feature dimension, yielding final output $X' = O_{\mathrm{cat}}(X)\, W_O \in \mathbb{R}^{L \times d}$ with projection $W_O \in \mathbb{R}^{(h d_v)\times d}$. 

\begin{wrapfigure}{r}{0.6\columnwidth}
  \centering
  \includegraphics[width=0.6\columnwidth]{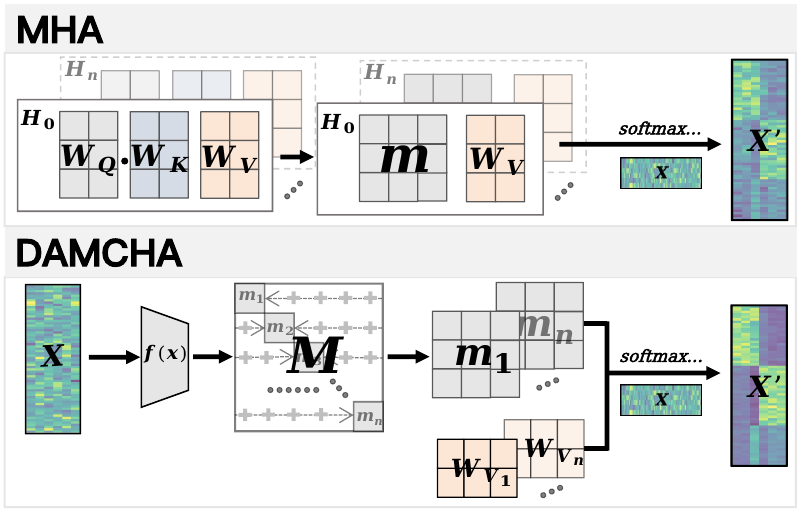}
  \caption{The workflows of DAMCHA and MHA.}
  \label{workflow}
\end{wrapfigure}
The RowSum operation preserves cross-head coupling information while offering two additional advantages. 
First, whereas each head metric $M_{i,i} = W_Q^{(i)}(W_K^{(i)})^\top$ in standard MHA is rank-limited 
by the projection dimension, summing $h$ cross-head blocks elevates the effective rank upper bound, alleviating the low-rank bottleneck. Second, by compressing the full cross-head metric into a single effective matrix per head, RowSum preserves the standard multi-head 
output structure, allowing DAMCHA to serve as a natural drop-in replacement for MHA with negligible additional overhead.
The workflows of DAMCHA and MHA are presented and compared in Figure \ref{workflow}. The data flow of DAMCHA is in Appendix~\ref{app:dataflow}.

\subsection{ Stack-Wise Sharing Mechanism}

In practical applications, attention modules are typically embedded within network blocks that must be stacked to build depth. Learning separate $f_\theta(\cdot)$ mappings (each outputting a different metric matrix for a different block) is not only computationally expensive, but also prone to severe overfitting. To address this issue, we share the parameters of $f_\theta(\cdot)$ across all stacked network blocks.
That is, all layers employ the same feedforward network to generate the adaptive CHMs.
The stack-wise parameter sharing strategy of DAMCHA is further illustrated using a basic decoder-only Transformer architecture, as shown in Figure~\ref{decoder}.

This stack-wise sharing yields multiple benefits.
First, it substantially reduces the total number of parameters, especially when stacking many layers,
thereby mitigating overfitting.
Second, it introduces an additional form of regularization by constraining all layers
to reuse a consistent family of metric transformations.
This prevents excessive divergence of attention geometries across layers
and avoids conflicts between layer-specific metric adaptations.
Empirically, we observe that this shared design leads to more stable training
and improved performance in deep architectures.

It is noted that, in DAMCHA, no explicit structural constraints or prior assumptions are imposed on any of $M_{i,j}(X)$. This is in sharp contrast to MHA, where the similarity matrices associated with individual heads, i.e., $M_{i,i}$, are inherently low-rank due to their dot-product formulation.
By allowing all $M_{i,j}(X)$s to be freely learned without rank restrictions, DAMCHA effectively alleviates the low-rank bottleneck inherent in conventional attention mechanisms, further enhancing the expressive capacity of the model.

\begin{figure}[t]
  \centering
  \includegraphics[width=0.8\textwidth]{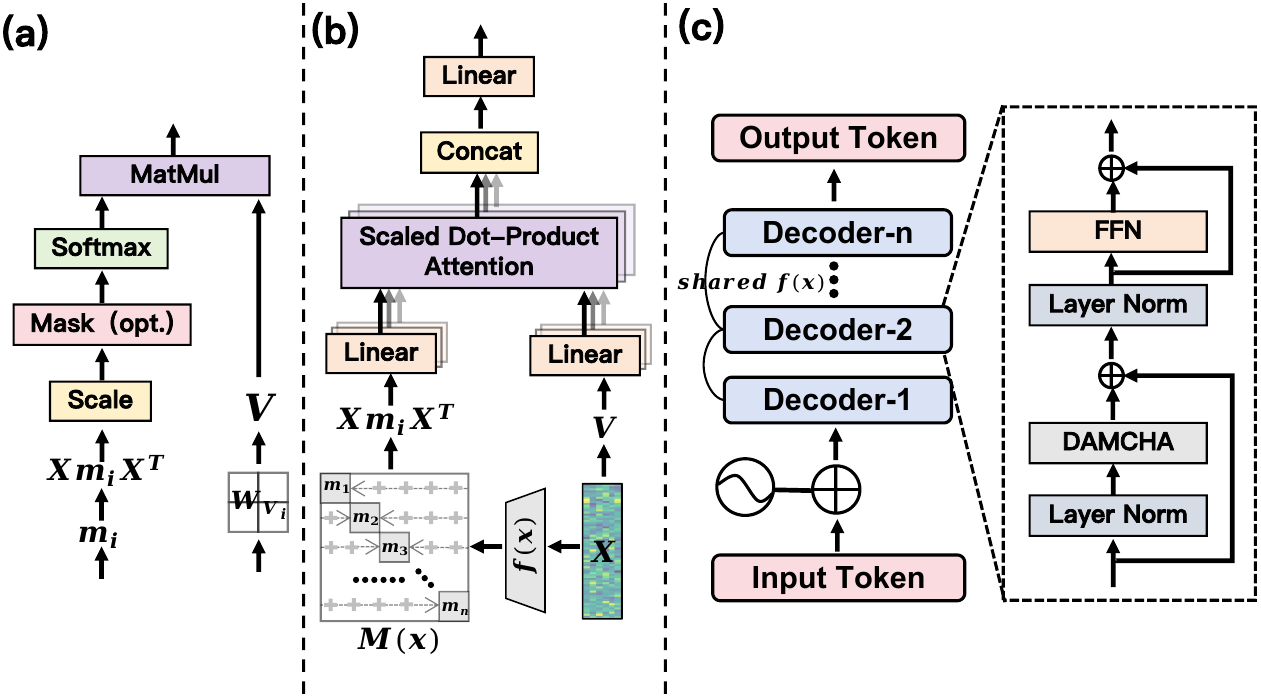}
  \caption{A basic DAMCHA-based decoder-only Transformer architecture with stack-wise sharing.}
  \label{decoder}
\end{figure}

\section{Theoretical View}

We provide two theoretical results to justify DAMCHA, with proofs deferred to 
Appendix~\ref{app:theory}. First, under mild expressivity assumptions on the metric generator, we prove that DAMCHA strictly generalizes standard MHA at the level of pre-softmax attention scores.  
This shows that DAMCHA is not merely a heuristic modification of MHA, but a strict functional extension.
Second, we analyze DAMCHA from a metric-learning risk perspective. In heterogeneous data, different contexts may require different optimal similarity metrics. A static MHA metric must approximate all local geometries using a single global low-rank metric. In contrast, DAMCHA can condition its cross-head metric matrix on the input and select context-specific metrics. Under standard curvature and identifiability assumptions, the optimal risk of DAMCHA is no larger than that of MHA,
with strict improvement whenever no single static low-rank metric can match all context-specific optimal metrics. For a detailed complexity analysis, see Appendix \ref{app:complexity}.

\section{Experiments}
\subsection{Decoder-Only Transformers with DAMCHA}
\label{sec:DAMCHA}

\textbf{Experimental Setting.} We evaluate DAMCHA on both image and text generation tasks using decoder-only Transformers. For image generation, we train autoregressive models on CIFAR-10 and CIFAR-100, where images are flattened into sequences of $32 \times 32 = 1024$ tokens with 8-bit pixel values. For text generation, we use WMT14 \cite{bojar2014findings} English-German translation and CommonGen \cite{lin-etal-2020-commongen} commonsense generation datasets. All models have 10 decoders, 8 attention heads, model dimension 512, and head dimension 64. Models are trained for 50 epochs with batch size 128, learning rate $3\times10^{-4}$, and Adam optimizer. We evaluate the performance using mean square error (MSE) and Fréchet Inception Distance (FID) for images, and Negative Log-Likelihood (NLL) and BARTScore for text. To assess performance on high-quality samples, we additionally report metrics on the top-50 samples ranked by their generation likelihood.
For DAMCHA, the metric generator $f_\theta$ is implemented as a 9-layer MLP,
following the capacity ablation in Section~\ref{sec:ablation}.

Table~\ref{tab:damcha_results_extended} compares DAMCHA with MHA and several recent baselines, THA \cite{shazeer2020talking}, DCMHA \cite{xiao2024improving}, ColMHA \cite{cordonnier2020multi}, MoA \cite{zhang2022mixture}, and MMA \cite{cui2019mixed}. DAMCHA achieves the best quality across both tasks.  

\textbf{Image Generation Performance.}
On CIFAR-10, DAMCHA obtains the lowest MSE mean (0.1210) and FID mean (331.8906), and substantially improves FID@Top-50 from 91.4320 under MHA to 63.0295, corresponding to a 31.1\% relative reduction. Compared with the strongest non-DAMCHA baseline on FID@Top-50, i.e., MMA (84.9717), DAMCHA still achieves a 25.8\% relative reduction. On CIFAR-100, DAMCHA also achieves the best FID mean (314.5722) and FID@Top-50 (53.7560), outperforming both MHA and recent attention variants.

\begin{wrapfigure}{l}{0.5\textwidth}
  \centering
  \includegraphics[width=\linewidth]{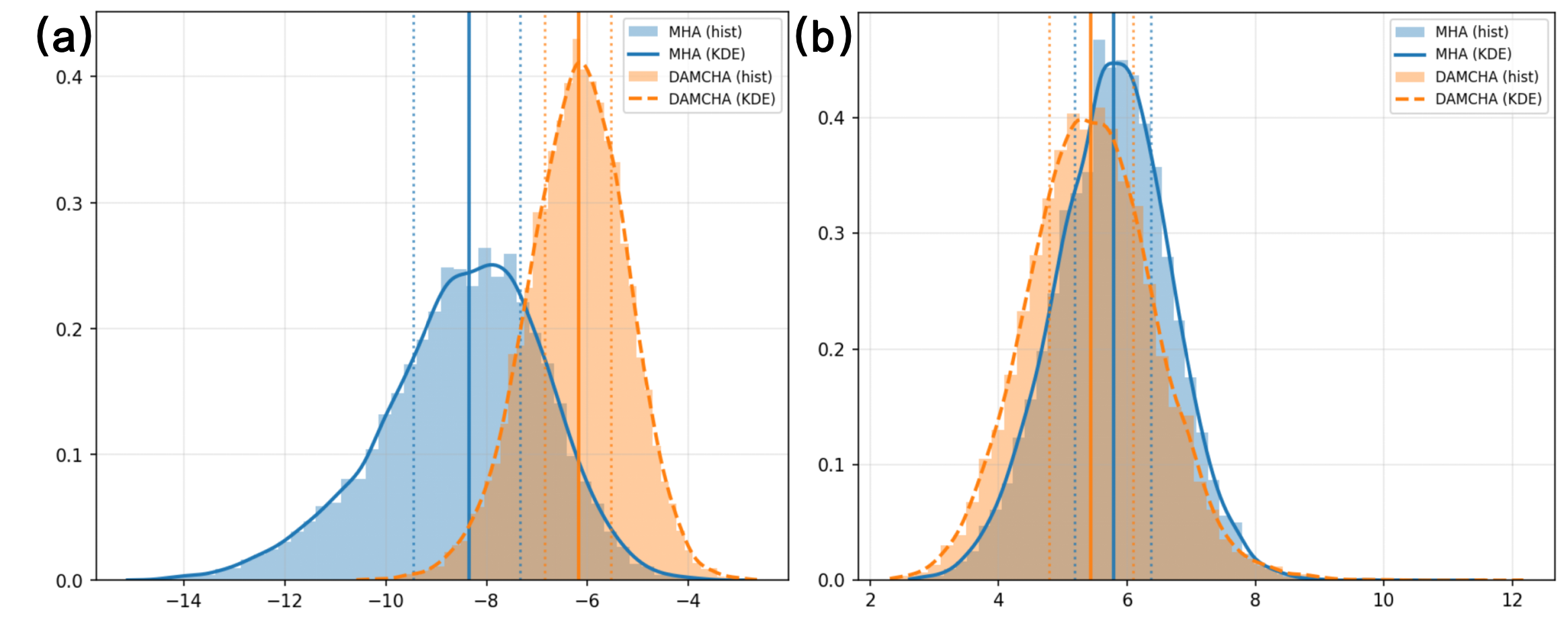}
  \caption{(a) Distribution of BARTScore on WMT data. (b) Distribution of NLL on WMT data.}
  \label{fig:wmt_res}
  \vspace{-0.5em}
\end{wrapfigure}

\begin{table*}[h]
  \centering
  \renewcommand{\arraystretch}{1.08}
  \caption{Comparison with MHA and recent multi-head attention variants on generation tasks.
  For MSE/FID/NLL, lower is better ($\downarrow$). For BARTScore, higher is better ($\uparrow$).}
  \label{tab:damcha_results_extended}
  \resizebox{\textwidth}{!}{%
  \begin{tabular}{llccc ccc c | llccc ccc c}
    \toprule
    \multicolumn{2}{l}{\textbf{Image generation}} &
    \multicolumn{3}{c}{MSE $\downarrow$} &
    \multicolumn{3}{c}{FID $\downarrow$} &
    \multicolumn{1}{c|}{FID@Top-50 $\downarrow$} &
    \multicolumn{2}{l}{\textbf{Text generation}} &
    \multicolumn{3}{c}{NLL $\downarrow$} &
    \multicolumn{3}{c}{BARTScore $\uparrow$} &
    \multicolumn{1}{c}{BARTScore@Top-50 $\uparrow$} \\
    \cmidrule(lr){3-5}\cmidrule(lr){6-8}\cmidrule(lr){12-14}\cmidrule(lr){15-17}
    Dataset & Method & mean & q25 & q75 & mean & q25 & q75 & & Dataset & Method & mean & q25 & q75 & mean & q25 & q75 & \\
    \midrule
 
    \multirow{7}{*}{CIFAR-10}
      & MHA    & 0.1260 & 0.0750 & 0.1610 & 358.8570 & 293.0850 & 417.5810 & 91.4320 &
    \multirow{7}{*}{WMT}
      & MHA    & 5.7730 & 5.1760 & 6.3750 & -8.4740 & -9.4590 & -7.3420 & -5.3090 \\
      & THA    & 0.1357 & 0.0827 & 0.1726 & 366.7536 & 301.3989 & 421.4758 & 107.2156 &
      & THA    & 5.2368 & 4.5104 & 5.7082 & -6.7477 & -7.5275 & -5.8776 & -4.5788 \\
      & DCMHA  & 0.1264 & 0.0758 & 0.1622 & 366.0802 & 297.8571 & 418.4777 & 105.1269 &
      & DCMHA  & 5.2461 & 4.5396 & 5.7308 & -6.8371 & -7.5254 & -6.0639 & -4.8286 \\
      & ColMHA & 0.1276 & 0.0752 & 0.1630 & 365.9343 & 298.6780 & 416.1373 & 120.1717 &
      & ColMHA & 5.2432 & 4.5387 & 5.7324 & -6.1477 & -6.7462 & -5.4902 & -4.3363 \\
      & MMA    & 0.1292 & 0.0771 & 0.1604 & 359.6663 & \textbf{267.3268} & 422.7725 & 84.9717 &
      & MMA    & \textbf{5.0301} & 4.5096 & 5.6847 & -8.3875 & -9.3292 & -7.3488 & -5.7903 \\
      & MoA    & 0.1576 & 0.0988 & 0.2036 & 368.4184 & 297.8040 & 416.2914 & 98.5777 &
      & MoA    & 5.2703 & 4.5695 & 5.7369 & -6.9642 & -7.6396 & -6.2873 & -5.0395 \\
      & \hl{DAMCHA} & \hl{\textbf{0.1210}} & \hl{\textbf{0.0730}} & \hl{\textbf{0.1547}} & \hl{\textbf{331.8906}} & \hl{297.1379} & \hl{\textbf{411.8182}} & \hl{\textbf{63.0295}} &
      & \hl{DAMCHA} & \hl{5.1907} & \hl{\textbf{4.4945}} & \hl{\textbf{5.6793}} & \hl{\textbf{-5.4451}} & \hl{\textbf{-6.0953}} & \hl{\textbf{-5.1735}} & \hl{\textbf{-4.1410}} \\
 
    \midrule
 
    \multirow{7}{*}{CIFAR-100}
      & MHA    & 0.1250 & 0.0737 & 0.1600 & 358.1270 & 293.7460 & 418.5330 & 59.6860 &
    \multirow{7}{*}{CommonGen}
      & MHA    & 5.7560 & 4.6510 & 6.7180 & -5.9250 & -6.4300 & -5.2720 & -4.2520 \\
      & THA    & 0.1352 & 0.0823 & 0.1719 & 364.3847 & 302.4176 & 420.1534 & 57.9292 &
      & THA    & 5.2287 & 4.3996 & 6.0161 & -5.6318 & -6.1313 & \textbf{-4.9114} & -3.7457 \\
      & DCMHA  & 0.1261 & 0.0754 & 0.1618 & 365.2419 & 298.4362 & 419.3781 & 79.9443 &
      & DCMHA  & 5.2754 & 4.3677 & 5.9989 & -5.6686 & \textbf{-5.7876} & -4.9677 & -3.8115 \\
      & ColMHA & 0.1273 & 0.0748 & 0.1625 & 364.5126 & 297.2184 & 417.6439 & 58.1658 &
      & ColMHA & 5.2210 & 4.3567 & 6.0294 & -5.6745 & -5.8728 & -4.9503 & -3.7936 \\
      & MMA    & 0.1289 & 0.0767 & 0.1597 & 358.4291 & 291.4753 & 423.1847 & 55.6137 &
      & MMA    & 5.2952 & \textbf{4.3092} & 5.9861 & -8.0765 & -8.6621 & -7.3931 & -6.2185 \\
      & MoA    & 0.1569 & 0.0981 & 0.2028 & 369.7253 & 298.3416 & 417.4832 & 88.2483 &
      & MoA    & 5.2918 & 4.2990 & 5.9834 & -5.6805 & -5.6899 & -4.9597 & -3.7899 \\
      & \hl{DAMCHA} & \hl{\textbf{0.1232}} & \hl{\textbf{0.0730}} & \hl{\textbf{0.1569}} & \hl{\textbf{314.5722}} & \hl{\textbf{288.9400}} & \hl{\textbf{411.5314}} & \hl{\textbf{53.7560}} &
      & \hl{DAMCHA} & \hl{\textbf{5.2204}} & \hl{4.3105} & \hl{\textbf{5.9788}} & \hl{\textbf{-5.6142}} & \hl{-6.1651} & \hl{-4.9410} & \hl{\textbf{-3.7417}} \\
 
    \bottomrule
  \end{tabular}%
  }
\end{table*}

\textbf{Text Generation Performance.} DAMCHA achieves the best BARTScore-based generation quality on both WMT and CommonGen. On WMT, DAMCHA improves BARTScore from -8.4740 under MHA to -5.4451 and improves BARTScore@Top-50 from -5.3090 to -4.1410. Compared with recent attention variants, DAMCHA also obtains the best BARTScore and BARTScore@Top-50, although MMA achieves a lower NLL mean. This suggests that lower token-level likelihood does not necessarily
translate into better sequence-level generation quality. MMA enforces head diversity through fixed attention masks
(global, local, forward, and backward patterns), which may reduce
per-token prediction uncertainty and thus lower NLL, but its rigid
structural constraints limit the model's ability to form flexible,
context-sensitive attention patterns required for coherent long-range
generation. BARTScore captures this distinction: MMA's forced diversity
yields competitive NLL at the cost of substantially degraded holistic
generation quality (BARTScore mean: $-8.3875$ for MMA 
versus $-5.4451$ for DAMCHA),
precisely the dimension that DAMCHA's adaptive cross-head metric is
designed to improve. On CommonGen, DAMCHA achieves the lowest NLL mean (5.2204), the best BARTScore mean (-5.6142), and the best BARTScore@Top-50 (-3.7417), further confirming the effectiveness of the proposed metric learning mechanism.

\begin{wrapfigure}{r}{0.5\textwidth}
  \centering
  \includegraphics[width=\linewidth]{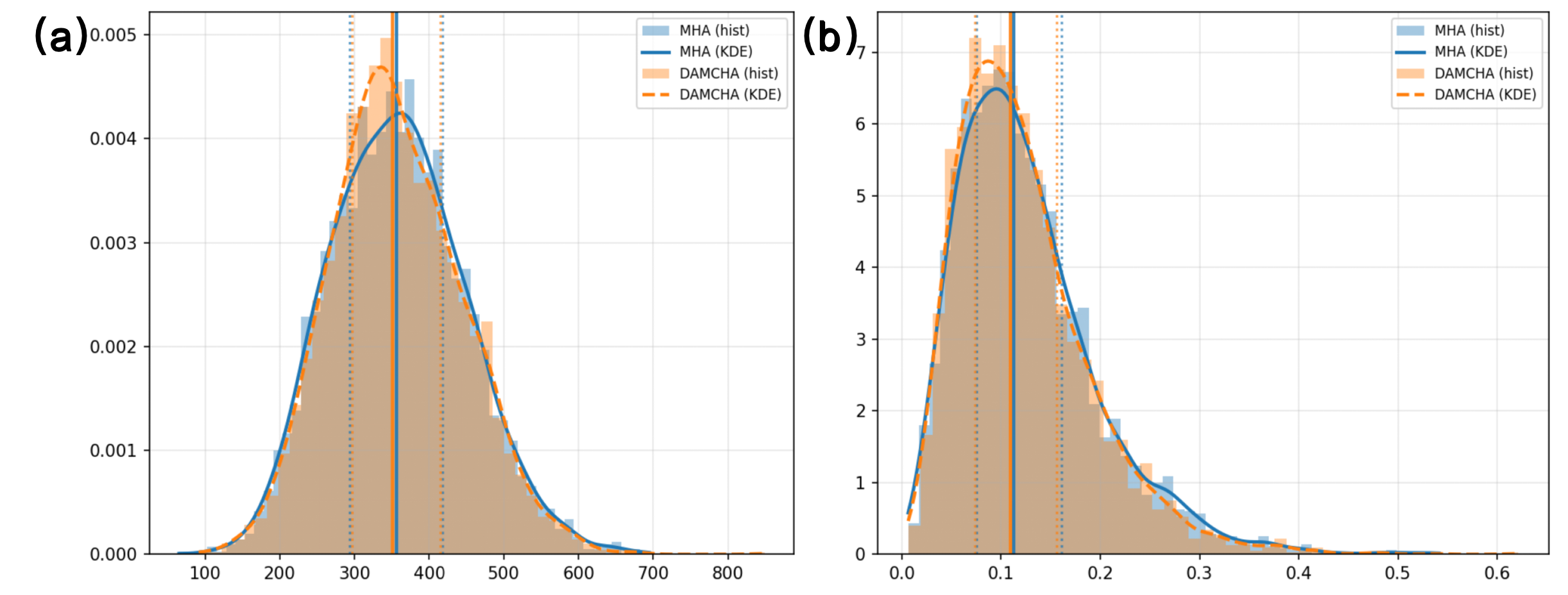}
  \caption{(a) Distribution of FID on CIFAR-10 data. (b) Distribution of MSE on data.CIFAR-10.}
  \label{fig:app_dist_cifar10}
\end{wrapfigure}

\textbf{Performance in High Quality Generation} A critical observation is that the improvement margins on Top-50 metrics are substantially larger than on average metrics. For instance, on WMT, BARTScore@Top-50 improves by 22.0\% ($-$5.3090 to $-$4.1410)
compared to 35.7\% in average BARTScore. The improvements hold consistently across all tasks, suggesting that DAMCHA's input-adaptive metric learning particularly benefits the generation of high-quality, confident outputs where cross-head coordination is critical for capturing complex dependencies.
Figure~\ref{fig:wmt_res} visualizes the distribution shifts on WMT samples. The BARTScore distribution (Figure \ref{fig:wmt_res}(a)) shows a similar rightward shift with reduced variance. The NLL distribution (Figure \ref{fig:wmt_res}(b)) of DAMCHA is notably tighter and shifted leftward compared to MHA, with the mode moving from approximately 5.8 to 5.2.
Figure~\ref{fig:app_dist_cifar10} visualizes the distribution shifts on the CIFAR-10 dataset and further shows that, under DAMCHA, the distributions of both MSE and FID are consistently more favorable than those obtained with MHA.
These distributional changes indicate that DAMCHA not only improves average performance but also reduces uncertainty across the entire sample space, leading to more reliable generation quality. Additional distribution visualizations on CIFAR-100 and CommonGen are provided in Appendix~\ref{app:distplots}. We also visualize the evolution CHM during training in Appendix~\ref{app:Appendix F}.
The results confirms that DAMCHA progressively learns to coordinate attention heads in input-dependent ways, moving beyond the block-diagonal constraint of standard MHA to a full Mahalanobis-like metric over the joint head space.

% \begin{table}[t]
%   \centering
%   \footnotesize
%   \setlength{\tabcolsep}{4pt}
%   \caption{Ablation results. Here, $f(x)$ denotes the generator that produces the diagonal entries of the metric matrix, implemented as a simple MLP.}
%   \label{tab:ablation}

%   \resizebox{\columnwidth}{!}{%
%   \begin{tabular}{llcc}
%     \toprule
%     Dataset & Variant & Metric & Metric@Top-50 \\
%     \midrule
%     \multirow{3}{*}{CIFAR-10 (FID $\downarrow$)}
%       & w/o $f(x)$+share & 233.038 & 76.554 \\
%       & w/o share     & 228.961 & 77.851 \\
%       & DAMCHA        & \textbf{227.641} & \textbf{65.132} \\
%     \midrule
%     \multirow{3}{*}{CIFAR-100 (FID $\downarrow$)}
%       & w/o $f(x)$+share & 229.467 & 61.950 \\
%       & w/o share     & 234.296 & 57.934 \\
%       & DAMCHA        & \textbf{211.290} & \textbf{53.756} \\
%     \midrule
%     \multirow{3}{*}{WMT (BARTScore $\uparrow$)}
%       & w/o $f(x)$+share & -6.868 & -4.972 \\
%       & w/o share     & -6.932 & -4.930 \\
%       & DAMCHA        & \textbf{-6.213} & \textbf{-4.672} \\
%     \midrule
%     \multirow{3}{*}{CommonGen (BARTScore $\uparrow$)}
%       & w/o $f(x)$+share & -7.321 & -4.784 \\
%       & w/o share     & -6.814 & -4.817 \\
%       & DAMCHA        & \textbf{-5.775} & \textbf{-4.196} \\
%     \bottomrule
%   \end{tabular}%
%   }
% \end{table}

\subsection{Efficiency and Computational Cost}
\label{subsec:efficiency}

We further evaluate DAMCHA and the baselines in terms of memory usage, GFLOPs, latency, and throughput. The results are reported in Table~\ref{tab:efficiency_cost}. 

\textbf{Image Generation Tasks.} DAMCHA requires memory usage comparable to ColMHA and lower than THA, DCMHA, and MoA. For CIFAR-10, DAMCHA uses 0.0786 GFLOPs, which are very close to those of the best-performing baseline, MMA, while achieving substantially better FID and FID@Top-50. On CIFAR-100, DAMCHA similarly maintains low memory usage and competitive latency while delivering the best generation quality. Although THA and ColMHA achieve slightly higher throughput on some image settings, their generation metrics are consistently worse than DAMCHA, indicating that DAMCHA provides a better quality--efficiency balance.

\begin{table}[t]
  \centering
  \scriptsize
  \renewcommand{\arraystretch}{1.08}
  \setlength{\tabcolsep}{4pt}
  \caption{Efficiency comparison with recent multi-head attention variants.}
  \label{tab:efficiency_cost}
  \resizebox{\textwidth}{!}{%
  \begin{tabular}{l rrr rrr rrr rrr}
    \toprule
    & \multicolumn{3}{c}{CIFAR-10} & \multicolumn{3}{c}{CIFAR-100} & \multicolumn{3}{c}{CommonGen} & \multicolumn{3}{c}{WMT} \\
    \cmidrule(lr){2-4}\cmidrule(lr){5-7}\cmidrule(lr){8-10}\cmidrule(lr){11-13}
    Method & Mem. & GFLOPs & Lat.(ms) & Mem. & GFLOPs & Lat.(ms) & Mem. & GFLOPs & Lat.(ms) & Mem. & GFLOPs & Lat.(ms) \\
    \midrule
    THA    & 255.40          & 0.0797          & \textbf{3.494}          & 255.40          & 0.0797          & \textbf{3.535}          & 406.80  & 0.1845          & 3.611          & 1416.68 & 0.4698          & 7.299          \\
    DCMHA  & 292.71          & 0.0785          & 4.443          & 292.71          & 0.0785          & 4.499          & 463.36  & 0.1780          & 4.428          & 1650.30 & 0.4399          & 8.148          \\
    ColMHA & 207.39          & 0.0765          & 3.576          & 207.39          & 0.0765          & 3.552 & 282.79  & 0.1699          & \textbf{3.529} &  903.74 & 0.4068          & \textbf{3.868} \\
    MMA    & \textbf{191.39} & \textbf{0.0755} & 16.677         & \textbf{191.39} & \textbf{0.0755} & 17.606         & 281.05  & 0.1678          & 25.863         &  923.74 & 0.4027          & 44.356         \\
    MoA    & 321.07          & 0.1101          & 4.078          & 321.07          & 0.1101          & 4.098          & 484.35  & 0.2538          & 4.095          & 1649.67 & 0.6417          & 6.770          \\
    \rowcolor{red!8}
    DAMCHA & 204.70          & 0.0786          & 3.868          & 196.57          & 0.0786          & 3.843          & \textbf{271.23} & \textbf{0.1552} & 3.844          & \textbf{866.92} & \textbf{0.3586} & 3.895          \\
    \bottomrule
  \end{tabular}%
  }
\end{table}
% \begin{table}[t]
%   \centering
%   \scriptsize
%   \renewcommand{\arraystretch}{1.08}
%   \setlength{\tabcolsep}{4pt}
%   \caption{Efficiency comparison with recent multi-head attention variants.}
%   \label{tab:efficiency_cost}
%   \resizebox{\textwidth}{!}{%
%   \begin{tabular}{ll rr rr rr rr}
%     \toprule
%     & & \multicolumn{2}{c}{CIFAR-10} & \multicolumn{2}{c}{CIFAR-100} & \multicolumn{2}{c}{CommonGen} & \multicolumn{2}{c}{WMT} \\
%     \cmidrule(lr){3-4}\cmidrule(lr){5-6}\cmidrule(lr){7-8}\cmidrule(lr){9-10}
%     Method & & Memory & GFLOPs & Memory & GFLOPs & Memory & GFLOPs & Memory & GFLOPs \\
%     \midrule
%     THA    & & 255.40 & 0.0797 & 255.40 & 0.0797 & 406.80 & 0.1845 & 1416.68 & 0.4698 \\
%     DCMHA  & & 292.71 & 0.0785 & 292.71 & 0.0785 & 463.36 & 0.1780 & 1650.30 & 0.4399 \\
%     ColMHA & & 207.39 & 0.0765 & 207.39 & 0.0765 & 282.79 & 0.1699 &  903.74 & 0.4068 \\
%     MMA    & & \textbf{191.39} & \textbf{0.0755} & \textbf{191.39} & \textbf{0.0755} & 281.05 & 0.1678 &  923.74 & 0.4027 \\
%     MoA    & & 321.07 & 0.1101 & 321.07 & 0.1101 & 484.35 & 0.2538 & 1649.67 & 0.6417 \\
%     \hl{DAMCHA} & & \hl{204.70} & \hl{0.0786} & \hl{196.57} & \hl{0.0786} & \hl{\textbf{271.23}} & \hl{\textbf{0.1552}} & \hl{\textbf{866.92}} & \hl{\textbf{0.3586}} \\
%     \bottomrule
%   \end{tabular}%
%   }
% \end{table}

\textbf{Text Generation Tasks.} The efficiency advantage becomes more pronounced on text generation tasks. On WMT, DAMCHA uses 866.92 memory units and 0.3586 GFLOPs, which are lower than THA, DCMHA, ColMHA, MMA, and MoA. It also achieves much lower latency than THA, DCMHA, MMA, and MoA. On CommonGen, DAMCHA also uses substantially less memory than THA, DCMHA, and MoA while maintaining competitive latency and throughput. These results show that DAMCHA's improvements are not simply obtained by increasing model size or computation, but are mainly attributable to the proposed cross-head metric learning design.

\begin{table*}[t]
  \centering
  \small
  \setlength{\tabcolsep}{4pt}
  \caption{Ablation on the capacity of the standard DAMCHA's $f(x)$ at different depth, and inference using full-, diagonal-only, and off-diagonal-only metric matrices. Within each row (fixed setting and dataset), the best layer is highlighted in \textcolor{red}{red bold} and the worst layer in \textcolor{blue}{blue} for each metric block.}
  \label{tab:damcha_diag_offdiag_ablation}

  \resizebox{\textwidth}{!}{%
  \begin{tabular}{llcccccccccc}
    \toprule
    \multirow{2}{*}{Setting} & \multirow{2}{*}{Dataset} &
    \multicolumn{5}{c}{FID@Top-50 / BART@Top-50} &
    \multicolumn{5}{c}{FID / BART} \\
    \cmidrule(lr){3-7}\cmidrule(lr){8-12}
    & & 1-layer & 3-layer & 5-layer & 7-layer & 9-layer
      & 1-layer & 3-layer & 5-layer & 7-layer & 9-layer \\
    \midrule

    \multirow{4}{*}{DAMCHA}
      & CIFAR-10
      & \worst{77.97} & 75.79 & 71.07 & 74.07 & \best{58.64}
      & \worst{250.35} & 231.52 & 234.37 & \best{229.09} & 229.84 \\
      & CIFAR-100
      & \worst{65.20} & 60.82 & 57.64 & 52.14 & \best{50.29}
      & \worst{275.80} & 240.98 & 221.85 & 237.57 & \best{217.89} \\
      & WMT
      & \worst{-5.26} & -5.14 & -5.08 & -4.98 & \best{-4.51}
      & \worst{-7.71}  & -7.27  & -6.00  & -6.32  & \best{-5.61} \\
      & CommonGen
      & \worst{-4.97} & -4.90 & -4.62 & -4.84 & \best{-4.60}
      & \worst{-7.83}  & -6.53  & -7.65  & -6.90  & \best{-6.20} \\
    \midrule

    \multirow{4}{*}{Diagonal-only}
      & CIFAR-10
      & 102.51 & 105.05 & 108.505 & 98.703 & 112.856
      & 248.38 & 233.05 & 235.17 & 233.485 & 239.609 \\
      & CIFAR-100
      & 85.763 & 69.73  & 76.102 & 59.685 & 61.945
      & 275.535 & 227.707 & 236.587 & 231.397 & 237.241 \\
      & WMT
      & -6.218 & -6.043 & -5.622 & -5.457 & -5.618
      & -7.186 & -7.411 & -7.070 & -7.854 & -7.859 \\
      & CommonGen
      & -6.068 & -5.213 & -5.336 & -4.865 & -5.557
      & -7.674 & -6.770 & -7.612 & -7.039 & -6.956 \\
    \midrule

    \multirow{4}{*}{Off-diagonal-only}
      & CIFAR-10
      & 97.318 & 140.37 & 87.217 & 97.647 & 126.921
      & 245.87 & 252.75 & 250.201 & 249.907 & 241.249 \\
      & CIFAR-100
      & 91.10 & 104.128 & 105.026 & 97.144 & 72.746
      & 268.725 & 246.883 & 247.715 & 250.512 & 236.142 \\
      & WMT
      & -6.892 & -6.101 & -6.335 & -5.997 & -6.488
      & -8.095 & -7.140 & -8.112 & -6.957 & -8.119 \\
      & CommonGen
      & -5.10 & -5.681 & -5.954 & -5.2457 & -5.854
      & -6.926 & -8.626 & -8.913 & -7.154 & -8.467 \\
    \bottomrule
  \end{tabular}%
  }
\end{table*}

\subsection{Ablation Study}
\label{sec:ablation}

\textbf{Metric Learning \& Stack-Wise Sharing.} We first ablate the two key innovations of DAMCHA: (1) the input-adaptive metric generator $f(x)$ and (2) the stack-wise sharing mechanism. Table~\ref{tab:ablation} presents results under three configurations: \textit{w/o $f(x)$+share} removes both components, reducing DAMCHA to a static, unshared cross-head metric per-layer; \textit{w/o share} retains input-adaptivity but uses layer-specific generators; and DAMCHA with both mechanisms.

The results demonstrate that both components are critical. Removing both $f(x)$ and sharing leads to the largest degradation---on CIFAR-10, FID@Top-50 increases by 17.59\% (from 65.132 to 76.554), while average FID degrades by only 2.37\%. This substantial gap confirms that input-adaptive metric learning is particularly crucial for high-quality generation. The \textit{w/o share} variant shows moderate degradation, with FID@Top-50 increasing by 19.59\% on CIFAR-10, indicating that independent per-layer causes overfitting and instability.
Across all datasets, the pattern is consistent: removing input-adaptivity causes larger performance drops on Top-50 metrics than on average metrics. On CommonGen, BARTScore@Top-50 degrades by 13.99\% without $f(x)$ compared to only 26.77\% degradation in average BARTScore. Moreover, we provide a visualized gradient-flow analysis in Appendix~\ref{app:Appendix G} to illustrate how our stack-wise sharing improves learning performance.

\begin{wraptable}{l}{0.5\textwidth}
  \centering
  \scriptsize
  \setlength{\tabcolsep}{3pt}
  \caption{Ablation results. Here, $f(x)$ is implemented as an MLP. Blue $\downarrow$ indicates the relative degradation w.r.t.\ DAMCHA (in \%).}
  \label{tab:ablation}
  \resizebox{\linewidth}{!}{%
  \begin{tabular}{llcc}
    \toprule
    Dataset & Variant & Metric & Metric@Top-50 \\
    \midrule
    \multirow{3}{*}{CIFAR-10 (FID $\downarrow$)}
      & w/o $f(x)$+share & 233.038\,\dec{-2.37}  & 76.554\,\dec{-17.59} \\
      & w/o share        & 228.961\,\dec{-0.58}  & 77.851\,\dec{-19.59} \\
      & DAMCHA           & \textbf{227.641}       & \textbf{65.132} \\
    \midrule
    \multirow{3}{*}{CIFAR-100 (FID $\downarrow$)}
      & w/o $f(x)$+share & 229.467\,\dec{-8.66}  & 61.950\,\dec{-15.25} \\
      & w/o share        & 234.296\,\dec{-10.95} & 57.934\,\dec{-7.77} \\
      & DAMCHA           & \textbf{211.290}       & \textbf{53.756} \\
    \midrule
    \multirow{3}{*}{WMT (BARTScore $\uparrow$)}
      & w/o $f(x)$+share & -6.868\,\dec{-10.54}  & -4.972\,\dec{-6.43} \\
      & w/o share        & -6.932\,\dec{-11.57}  & -4.930\,\dec{-5.54} \\
      & DAMCHA           & \textbf{-6.213}        & \textbf{-4.672} \\
    \midrule
    \multirow{3}{*}{CommonGen (BARTScore $\uparrow$)}
      & w/o $f(x)$+share & -7.321\,\dec{-26.77}  & -4.784\,\dec{-13.99} \\
      & w/o share        & -6.814\,\dec{-17.99}  & -4.817\,\dec{-14.78} \\
      & DAMCHA           & \textbf{-5.775}        & \textbf{-4.196} \\
    \bottomrule
  \end{tabular}%
  }
  \vspace{-0.5em}
\end{wraptable}

\textbf{Depth-wise Metric Learning \& Cross-Head Learning.} To analyze how metric learning strength affects learning performance and to verify the respective contributions of diagonal and off-diagonal blocks, we conduct three experiments: (1) standard DAMCHA using the full metric matrix, (2) diagonal-only inference where we zero out all off-diagonal blocks $M_{i,j}$ for $i \neq j$, and (3) off-diagonal-only inference where we zero out all diagonal blocks. For each setting, we evaluate the performance of $f(x)$ with different capacities (with 1, 3, 5, 7, 9 hidden layers).

Table~\ref{tab:damcha_diag_offdiag_ablation} reveals two key findings. First, deeper layers consistently achieve better performance under the standard DAMCHA setting. For example, on CIFAR-10, FID@Top-50 improves from 77.97 at layer~1 to 58.64 at layer~9, while the average FID decreases from 250.35 to 229.84. This trend is consistent across all datasets, confirming that later layers learn increasingly task-relevant metric transformations. This hierarchical pattern aligns with the metric matrix evolution shown in Figure~\ref{fig:cifar10_m_matrix}, where deeper layers develop more specialized interaction structures. These results indicate that appropriately designed metric learning is necessary for enhancing the expressive power of model.

Second, the diagonal-only inference results indicate that the block-diagonal constraint imposed by standard MHA is indeed limiting. Compared with full DAMCHA, diagonal-only inference exhibits a substantial performance degradation, particularly on the Top-50 metrics. On CIFAR-10, even the best diagonal-only configuration (layer~7, FID@Top-50 = 98.703) performs worse than the weakest full DAMCHA configuration (layer~1, FID@Top-50 = 77.97). Moreover, off-diagonal-only inference performs significantly worse than both full and diagonal-only metrics. This observation suggests that the data inherently contains interactions at cross-head level, and that cross-head interactions modeled through off-diagonal blocks are essential for capturing complex dependencies. More visualized results on metric geometry in ablations are presented in Appendix~\ref{app:ablationviz}.

\subsection{Autoregressive Generation with GPT-2}

We apply DAMCHA to the GPT-2 and conduct token-level pretraining experiments on three datasets:
Enwik8 (100M characters of Wikipedia text available at \url{https://huggingface.co/google/reformer-enwik8}.),
Text8 (100M characters of cleaned Wikipedia, available at \url{https://huggingface.co/datasets/afmck/text8}), and Mix (200M characters, a 50-50 mixture of Enwik8 and Text8). All models have 12 layers, 12 attention heads, model dimension 768, and are trained for 100K steps with batch size 64, context length 512, and learning rate $6\times10^{-4}$. We report cross-entropy loss and perplexity (PPL) on held-out validation sets. Since DCMHA achieves the best performance among all baselines, we report only the comparison results against DCMHA and MHA. 
\par
Table~\ref{tab:mha_vs_damcha_token} 
demonstrates that DAMCHA consistently outperforms both MHA and DCMHA across all three datasets. Notably, the gains are largest on the Mix dataset, where DAMCHA achieves 5.19\% and 3.70\% loss
reduction over MHA and DCMHA respectively. This is consistent with DAMCHA's design rationale:
the Mix dataset's heterogeneous semantic patterns demand cross-head coordination that MHA's
block-diagonal structure cannot provide and DCMHA only partially addresses, while DAMCHA
dynamically reconfigures its attention geometry to match the local context. Since DCMHA achieves the best performance among all baselines in the main generation experiments (Table \ref{tab:damcha_results_extended}), we focus our comparison on MHA and DCMHA to highlight the most informative contrast; the remaining baselines follow the same relative ordering observed in Section \ref{sec:DAMCHA}.

\subsection{Discriminative Evaluation with Vision Transformers}
\begin{wraptable}{r}{0.6\textwidth}
  \centering
  \scriptsize
  \setlength{\tabcolsep}{4pt}
  \caption{Token-level pretraining results. Lower is better for cross-entropy loss and PPL. Red $\uparrow$ denotes DAMCHA's relative improvement over DCMHA (in \%).}
  \label{tab:mha_vs_damcha_token}
  \begin{tabular}{lcccccc}
    \toprule
    Dataset & \multicolumn{3}{c}{Cross-entropy loss $\downarrow$} & \multicolumn{3}{c}{PPL $\downarrow$} \\
    \cmidrule(lr){2-4}\cmidrule(lr){5-7}
            & MHA & DCMHA & DAMCHA & MHA & DCMHA & DAMCHA \\
    \midrule
    Enwik8 & 1.07 & 1.05 & \textbf{1.03}\,\inc{1.90} & 2.90 & 2.86 & \textbf{2.80}\,\inc{2.10} \\
    Text8  & 1.06 & 1.05 & \textbf{1.03}\,\inc{1.90} & 2.88 & 2.86 & \textbf{2.81}\,\inc{1.75} \\
    Mix    & 1.09 & 1.08 & \textbf{1.04}\,\inc{3.70} & 2.98 & 2.94 & \textbf{2.82}\,\inc{4.08} \\
    \bottomrule
  \end{tabular}
   \vspace{-0.5em}
\end{wraptable}

We evaluate DAMCHA on ImageNet-1K classification using ViT-S/16 (12 layers, 6 heads, dimension 384, 22M parameters). Models are trained for 500 epochs with batch size 1024, learning rate 0.001, and standard augmentation. We report Top-1 and Top-5 accuracy at epochs 100 and 500 \cite{deng2009imagenet}.
Table~\ref{tab:vit_results} shows that DAMCHA consistently outperforms both MHA and DCMHA baselines. 
DAMCHA achieves the highest accuracy across all evaluation points. At epoch 100, it obtains 70.93\% Top-1 and 89.71\% Top-5 accuracy, outperforming both MHA and DCMHA and indicating faster early-stage convergence. At epoch 500, DAMCHA further improves to 82.10\% Top-1 and 96.23\% Top-5 accuracy, maintaining consistent advantages over MHA and DCMHA. Although DCMHA narrows the gap after longer training, DAMCHA remains the best-performing model on all reported metrics.
The larger early gap between DAMCHA and DCMHA suggests that DAMCHA's dynamic head coordination enables more efficient feature extraction from the start of training. While DCMHA narrows the gap with extended training, DAMCHA maintains advantages in both convergence speed and final performance across all metrics. Following the focused comparison strategy of Section \ref{sec:DAMCHA}, we report results against MHA and DCMHA, the strongest baseline across the generation experiments.
\begin{wraptable}{l}{0.5\textwidth}
  \caption{Results on ImageNet-1K classification.}
  \label{tab:vit_results}
  \resizebox{\linewidth}{!}{%
  \begin{tabular}{lcccc}
    \toprule
    \multirow{2}{*}{Model} & \multicolumn{2}{c}{Top-1} & \multicolumn{2}{c}{Top-5} \\
    \cmidrule(lr){2-3} \cmidrule(lr){4-5}
    & epoch 100 & epoch 500 & epoch 100 & epoch 500 \\
    \midrule
    MHA     & 69.02\% & 79.51\% & 88.68\% & 94.68\% \\
    DCMHA  & 67.90\% & 81.70\% & 88.71\% & 95.00\% \\
    DAMCHA  & \textbf{70.93\%} & \textbf{82.10\%} & \textbf{89.71\%} & \textbf{96.23\%} \\
    \bottomrule
  \end{tabular}%
  } \vspace{-0.5em}
\end{wraptable}
\subsection{Experiments with Synthetic Tasks}
To further understand the mechanism underlying DAMCHA's improvements, we design two synthetic
tasks that explicitly require coordinated attention across multiple
subspaces. These controlled experiments reveal that DAMCHA's advantage grows with the strength of
cross-subspace entanglement: DAMCHA's relative gain increases and scales
with feature dimensionality in the classification task. Crucially, when cross-subspace interactions
are weak or absent, DAMCHA remains competitive with MHA, confirming that the adaptive cross-head
metric incurs no penalty in simpler settings. These results provide controlled evidence that
DAMCHA's improvements on real-world benchmarks are driven by its ability to model dependencies that
span multiple attention heads, rather than by an increase in parameter count or computational cost.
Full experimental details and results are provided in Appendix~\ref{app:synthetic}. Computational resource details are reported in
Appendix~\ref{app:compute}, and a mechanistic
discussion of DAMCHA's advantages over competing baselines is provided in
Appendix~\ref{app:comparison}.

\section{Conclusion}
This paper proposes a novel attention mechanism, DAMCHA. By lifting both the block-diagonal and input-independent constraints of MHA, DAMCHA generalizes it into a data-adaptive metric learning framework. This formulation enables dynamic reconfiguration of attention geometry and explicit cross-head interaction modeling, providing significantly enhanced representational and approximation capabilities.
Extensive experiments consistently demonstrate that DAMCHA delivers stronger performance, with particularly pronounced gains on high-quality and high-confidence samples. These results confirm that learning input-adaptive cross-head metrics for attention effectively enhances the representational capacity of Transformers, making DAMCHA well suited for complex dependency modeling and generative tasks.

\setcitestyle{square,numbers}
\bibliographystyle{elsarticle-num}
\bibliography{references}
%%%%%%%%%%%%%%%%%%%%%%%

\newpage

\appendix
\section*{Appendix}

\section{Further Illustration on Expressive Power and Advantages of CHM}
\label{app:Appendix A}

The introduction of cross-head interactions significantly enhances representational flexibility.
For example, when the similarity between two tokens depends on interactions between features residing in different subspaces, standard MHA may fail to capture such dependencies if those features are assigned to separate heads.
The CHM directly models these interactions via the off-diagonal blocks $M_{i,j}$, enabling explicit cross-subspace similarity estimation.
From a theoretical standpoint, removing the block-diagonal constraint allows the attention score matrix to represent higher-rank correlation patterns.
As a result, a single attention layer equipped with cross-head metrics can express information structures that would otherwise require multiple stacked layers in standard MHA, leading to more efficient and powerful representation learning.

For our adaptive learning of CHM, the formulation can be interpreted as introducing a form of conditional metric learning, where the attention geometry is modulated by the current context.
Rather than learning a single global notion of similarity,
the model learns how to select or generate an appropriate metric given the input.
Functionally, this resembles dynamic parameterization or conditional computation,
in which a lightweight network produces part of the parameters of the main model
to enable context-dependent behavior.
From a representational standpoint, generating $M(X)$ via $f(\cdot)$ explicitly introduces nonlinear,
higher-order interactions into attention.
The resulting attention score depends not only on the quadratic form $X^\top M X$,
but also on the nonlinear transformation induced by $f$.
As such, the attention mechanism can be viewed as operating on higher-order polynomial features of $X$,
substantially enriching its function approximation and expressive capacity. 

Several advantages are expected to be provided by our method
Context-sensitive attention:
    The model can dynamically adjust its attention patterns across different inputs,
    improving sensitivity to contextual variations.
Enhanced expressiveness:
    By allowing $M$ to depend on $X$, the attention mechanism is no longer restricted to a fixed bilinear form,
    but instead realizes a composite mapping,
    which spans a significantly richer function class and can capture more complex dependencies.
Reduced approximation bias:
    The adaptive mechanism alleviates the bias introduced by static parameters.
    Even if certain relationships are not learned as dominant global patterns during training,
    the model can still approximate them for specific inputs by generating specialized attention metrics,
    thereby reducing representational blind spots.

\paragraph{Asymmetric Design of Value Projection.}
A natural question concerns the asymmetric treatment of the query--key pathway versus the value pathway in DAMCHA: while $f_\theta$ replaces conventional linear query--key projections with an input-adaptive metric generator, the value projection $W_V^{(i)}$ retains the standard static linear form of MHA. This design is deliberate. The query--key pathway determines \emph{where} attention is directed, that is, the similarity geometry, which varies most across contexts and benefits most from input adaptivity. The value projection, by contrast, determines \emph{what} information is aggregated given fixed attention weights, a role that is well served by a static linear transformation in practice. Extending adaptivity to the value pathway would add computational overhead and optimization complexity without the same theoretical motivation: the expressiveness gains in Theorems~\ref{thm:strict_generalization} and~\ref{thm:metric_advantage} arise specifically from the adaptive similarity structure in the pre-softmax score, not from adaptive value representations. Exploring adaptive value projections in combination with DAMCHA's metric learning is nonetheless an interesting direction for future work.

\section{Data Flow and Tensor Shapes of DAMCHA}
\label{app:dataflow}

Figure~\ref{fig:app_dataflow} summarizes the end-to-end data flow of DAMCHA and MHA.

\begin{figure}[h]
    \centering
    \includegraphics[width=\textwidth]{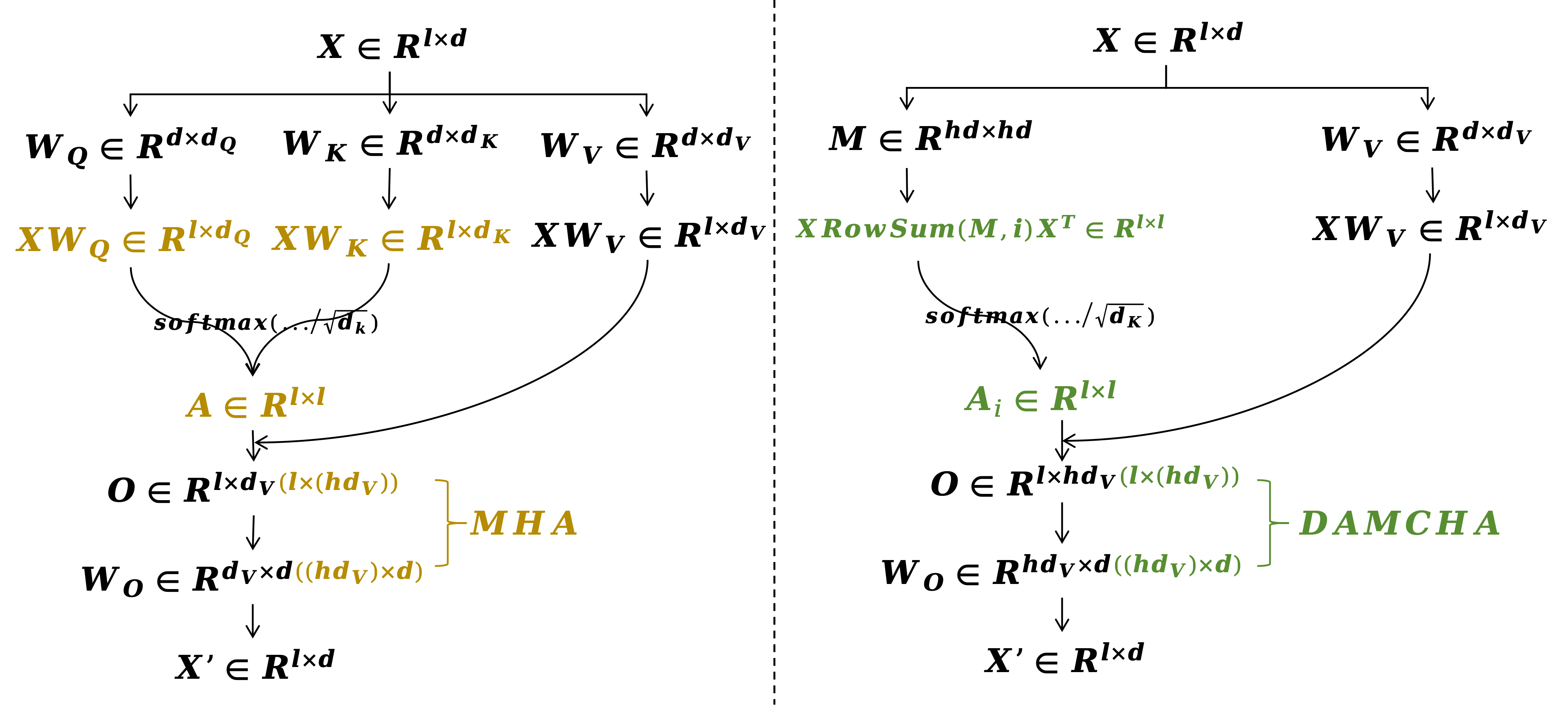}
    \caption{Data flow and tensor shapes of DAMCHA (right, with the value/output projection shapes consistent with standard MHA)}
    \label{fig:app_dataflow}
\end{figure}

\section{Theoretical Analysis}
\label{app:theory}

In this section, we provide a theoretical justification for DAMCHA from two
complementary perspectives. First, we show that the score-function class induced
by DAMCHA strictly contains that of standard MHA. Second, we interpret both MHA
and DAMCHA through the lens of metric learning and prove that input-adaptive
cross-head metrics can reduce the irreducible metric mismatch incurred by a
static global metric.

\subsection{DAMCHA Strictly Generalizes MHA}
\label{subsec:theory_generalization}

We first formalize the attention score functions induced by standard MHA and
DAMCHA. Let \(X\in\mathbb{R}^{L\times d}\) denote an input sequence, where
\(L\) is the sequence length and \(d=d_{\mathrm{model}}\). For the \(i\)-th
head of standard MHA, the pre-softmax score matrix can be written as
\begin{eqnarray}
S_i^{\mathrm{MHA}}(X)
=
\frac{1}{\sqrt{d_h}}X M_i X^\top,
\qquad
M_i = W_Q^{(i)}(W_K^{(i)})^\top ,
\end{eqnarray}
where \(W_Q^{(i)},W_K^{(i)}\in\mathbb{R}^{d\times d_h}\).
Thus, each head in MHA corresponds to a static low-rank bilinear similarity
metric and $\operatorname{rank}(M_i)\le d_h$.
For DAMCHA, the \(i\)-th head is given by
\begin{eqnarray}
\bar M_i(X)=\sum_{j=1}^h M_{i,j}(X),
\end{eqnarray}
where \(M_{i,j}(X)\) is generated by the input-dependent metric generator
\(f_\theta(\cdot)\). The corresponding pre-softmax score matrix is
\begin{eqnarray}
S_i^{\mathrm{DAMCHA}}(X)=\frac{1}{\sqrt{d_k}}X\bar M_i(X)X^\top.
\end{eqnarray}

We define the score-function class induced by standard MHA as
\begin{eqnarray}
\mathcal{S}_{\mathrm{MHA}}
=
\left\{
X\mapsto
\left(
XM_1X^\top,\ldots,XM_hX^\top
\right):
M_i=A_iB_i^\top,\ 
A_i,B_i\in\mathbb{R}^{d\times d_h}
\right\}.
\end{eqnarray}
Similarly, the score-function class induced by DAMCHA is defined as
\begin{eqnarray}
\mathcal{S}_{\mathrm{DAMCHA}}
=
\left\{
X\mapsto
\left(
X\bar M_1(X)X^\top,\ldots,X\bar M_h(X)X^\top
\right):
\bar M_i(X)=\sum_{j=1}^hM_{i,j}(X)
\right\}.
\end{eqnarray}

We impose the following mild expressivity assumption on the metric generator.

\paragraph{Assumption 1.}
The metric generator \(f_\theta\) is capable of representing constant matrix
mappings. That is, for any collection of matrices
\(\{B_{i,j}\}_{i,j=1}^h\), there exists a parameter \(\theta\) such that
$M_{i,j}(X)\equiv B_{i,j}$, for all \(X\).
This assumption is mild because a standard feedforward network or hypernetwork
with bias terms can represent constant functions.

\begin{theorem}[Strict Function-Class Generalization of MHA]
\label{thm:strict_generalization}
Suppose Assumption~1 holds and \(d_h<d\). Then the score-function class induced
by standard MHA is strictly contained in that induced by DAMCHA:
\[
\mathcal{S}_{\mathrm{MHA}}
\subsetneq
\mathcal{S}_{\mathrm{DAMCHA}}.
\]
Consequently, DAMCHA strictly generalizes MHA at the level of pre-softmax
attention score functions.
\end{theorem}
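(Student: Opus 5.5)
The argument has two parts: the inclusion $\mathcal{S}_{\mathrm{MHA}}\subseteq\mathcal{S}_{\mathrm{DAMCHA}}$, and a witness showing the inclusion is strict. Throughout, the scaling constants are harmless. Both $\mathcal{S}_{\mathrm{MHA}}$ and $\mathcal{S}_{\mathrm{DAMCHA}}$ are defined without the factor $1/\sqrt{d_h}$, and $d_k=d_h$ anyway, so we compare the unscaled classes as written.

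\emph{Inclusion.} Take any element of $\mathcal{S}_{\mathrm{MHA}}$, given by $M_i=A_iB_i^\top$ with $A_i,B_i\in\mathbb{R}^{d\times d_h}$. Define the constant blocks $B_{i,i}:=M_i$ and $B_{i,j}:=0$ for $i\neq j$. By Assumption~1 there is a $\theta$ such that $M_{i,j}(X)\equiv B_{i,j}$ for all $X$. Then $\bar M_i(X)=\sum_j M_{i,j}(X)=M_i$ for every $X$, so the DAMCHA score map equals $X\mapsto (XM_1X^\top,\ldots,XM_hX^\top)$ exactly. There is one bookkeeping point: the blocks $M_{i,j}$ must act on $X\in\mathbb{R}^{L\times d}$ inside the quadratic form $X\bar M_i X^\top$, so they have to be $d\times d$. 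I will read the paper's ``$d_h\times d_h$'' as referring to the preliminary linear map mentioned in Section~\ref{sec:damcha}, and take $M_{i,j}(X)\in\mathbb{R}^{d\times d}$ in the function-class definition. I will state this convention explicitly, since the rest of the proof depends on it.

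\emph{Strictness.} The witness uses rank, since $d_h<d$ gives $\operatorname{rank}(M_i)\le d_h<d$ for every MHA element. Apply Assumption~1 with $B_{1,1}=I_d$ and all other blocks zero (any full-rank choice works). This gives the constant DAMCHA map $F(X)=(XX^\top,0,\ldots,0)$. Now suppose $F$ lay in $\mathcal{S}_{\mathrm{MHA}}$. Then some $M_1$ with $\operatorname{rank}(M_1)\le d_h$ would satisfy $XM_1X^\top=XX^\top$ for all $X$. The key step is to turn this equality of functions into an equality of matrices. Taking $L=1$ and $X=x^\top$ gives $x^\top M_1x=x^\top x$ for all $x\in\mathbb{R}^d$. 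This identifies only the symmetric part, $\tfrac12(M_1+M_1^\top)=I_d$. To get the full matrix, use $L\ge 2$: the off-diagonal entries of the score matrix are $x_a^\top M_1x_b$, and these must equal $x_a^\top x_b$ for all $x_a,x_b$, so $M_1=I_d$. This has rank $d>d_h$, a contradiction. If only $L=1$ is allowed, the symmetric-part identity still suffices, because $\operatorname{rank}(M_1+M_1^\top)\le 2d_h$. In that case I would fall back on a witness of rank exceeding $2d_h$, or note that sequences have $L\ge2$ in general.

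The main obstacle is this identification step: one must check that two bilinear forms agreeing as functions of $X$ forces the matrices to agree, and confirm the domain over which the function classes are compared, in particular that $L\ge 2$ is allowed. The rest of the argument is routine once the dimensional convention for $M_{i,j}$ is fixed.
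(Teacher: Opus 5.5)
Your proposal is correct and follows the paper's proof: inclusion comes from the constant blocks $M_{i,i}\equiv M_i$, $M_{i,j}\equiv 0$, and strictness from a constant witness of rank exceeding $d_h$; the paper takes an arbitrary such $B\in\mathbb{R}^{d\times d}$, which silently uses the same $d\times d$ block convention you make explicit, and it identifies $M=B$ from $u^\top M v=u^\top B v$ over all token pairs, exactly as your $L\ge 2$ step does. Your $L=1$ fallback is unnecessary, and a witness of rank $>2d_h$ need not exist when $d\le 2d_h$: with $B_{1,1}=I_d$, the identity $x^\top M_1 x=\|x\|^2$ already forces $\ker M_1=\{0\}$, so $\operatorname{rank}(M_1)=d>d_h$.
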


\begin{proof}
We prove the result in two steps. \textbf{Step 1: Inclusion.}
We first show that $\mathcal{S}_{\mathrm{MHA}} \subseteq \mathcal{S}_{\mathrm{DAMCHA}}.$
By Assumption~1, DAMCHA can choose its generated cross-head metric blocks as
constant matrices satisfying $M_{i,i}(X)\equiv M_i$ and $M_{i,j}(X)\equiv 0$, $j\neq i$.
Then the row-summed effective metric of the \(i\)-th head becomes
\begin{eqnarray}
\bar M_i(X)=\sum_{j=1}^h M_{i,j}(X)=M_{i,i}(X)=M_i.
\end{eqnarray}
It follows that
\begin{eqnarray}
X\bar M_i(X)X^\top
=XM_iX^\top.
\end{eqnarray}
Then, every score function produced by MHA can be exactly reproduced by DAMCHA.
This proves
\begin{eqnarray}
\mathcal{S}_{\mathrm{MHA}}
\subseteq
\mathcal{S}_{\mathrm{DAMCHA}}.
\end{eqnarray}

\paragraph{Step 2: Strictness.}
We now show that the inclusion is strict. Since \(d_h<d\), there exists a matrix
\(B\in\mathbb{R}^{d\times d}\) such that $\operatorname{rank}(B)>d_h$.
By Assumption~1, DAMCHA can choose a constant effective metric for the first
head as
\begin{eqnarray}
\bar M_1(X)\equiv B.
\end{eqnarray}
For instance, this can be achieved by setting $M_{1,1}(X)\equiv B$, $M_{1,j}(X)\equiv 0$, $j\neq 1$.
Then DAMCHA induces the score function $X\mapsto XB X^\top.$
Suppose, for contradiction, that this score function can also be represented by
standard MHA. Then there must exist matrices
\(A,B'\in\mathbb{R}^{d\times d_h}\) such that
$M=A(B')^\top$ and $XMX^\top=XB X^\top$ for all admissible inputs \(X\).
Equivalently, for all token vectors \(u,v\in\mathbb{R}^d\), we have
$u^\top Mv=u^\top Bv$, which implies $M=B$.
However, since \(M=A(B')^\top\), we have
\begin{eqnarray}
\operatorname{rank}(M)\le d_h.
\end{eqnarray}
This contradicts
\begin{eqnarray}
\operatorname{rank}(B)>d_h.
\end{eqnarray}
Then, the score function \(X\mapsto XB X^\top\) belongs to
\(\mathcal{S}_{\mathrm{DAMCHA}}\) but not to
\(\mathcal{S}_{\mathrm{MHA}}\). Hence,
\begin{eqnarray}
\mathcal{S}_{\mathrm{MHA}}
\subsetneq
\mathcal{S}_{\mathrm{DAMCHA}}.
\end{eqnarray}
The proof is complete.
\end{proof}

% \paragraph{Interpretation.}
% Theorem~\ref{thm:strict_generalization} shows that standard MHA is a special
% case of DAMCHA obtained by imposing three restrictions: the metric is
% input-independent, the cross-head blocks are zero, and each head-wise metric is
% low-rank due to the query-key factorization. DAMCHA removes these restrictions
% by allowing input-adaptive full cross-head metrics, thereby inducing a strictly
% larger score-function class.

\subsection{Metric-Learning Advantage of Dynamic Cross-Head Metrics}
\label{subsec:theory_metric_learning}

The previous result establishes that DAMCHA strictly enlarges the attention
score-function class. We next explain why this enlargement is meaningful from
the perspective of metric learning.

The key difference between MHA and DAMCHA is that MHA learns a static global
metric, whereas DAMCHA learns an input-adaptive metric. In heterogeneous data,
the optimal similarity geometry may vary across contexts. A single global
metric must average over these different local geometries and therefore suffers
from an irreducible metric mismatch. DAMCHA can reduce this mismatch by
conditioning the metric on the input.

Let \(C\in\{1,\ldots,K\}\) denote a latent context variable with
\begin{eqnarray}
\mathbb{P}(C=c)=\pi_c>0,
\qquad
\sum_{c=1}^K \pi_c=1.
\end{eqnarray}
The context \(C\) may represent different semantic regimes, syntactic patterns,
visual structures, or distributional modes. We assume that, for each context
\(c\), there exists a context-specific optimal collection of effective
attention metrics
\begin{eqnarray}
\mathbf{M}_c^\star
=
\left(
M_{1,c}^\star,\ldots,M_{h,c}^\star
\right),
\end{eqnarray}
where \(M_{i,c}^\star\in\mathbb{R}^{d\times d}\) denotes the optimal effective
metric for the \(i\)-th head under context \(c\). Let $\ell(\mathbf{M};X,Y)$
denote a generic metric-learning loss associated with the effective metrics
\begin{eqnarray}
\mathbf{M}=(M_1,\ldots,M_h).
\end{eqnarray}
This loss may correspond to attention alignment loss, contrastive metric loss,
classification loss, or a downstream task loss induced by the attention scores.
For each context \(c\), define the conditional risk
\begin{eqnarray}
r_c(\mathbf{M})
=
\mathbb{E}
\left[
\ell(\mathbf{M};X,Y)\mid C=c
\right].
\end{eqnarray}

We impose the following assumptions.

\paragraph{Assumption 2.}
For each context \(c\), the conditional risk \(r_c(\mathbf{M})\) has a unique
minimizer \(\mathbf{M}_c^\star\), namely
\begin{eqnarray}
\mathbf{M}_c^\star
=
\arg\min_{\mathbf{M}} r_c(\mathbf{M}).
\end{eqnarray}

\paragraph{Assumption 3.}
There exists a constant \(\mu>0\) such that, for every context \(c\) and every
collection of metrics \(\mathbf{M}=(M_1,\ldots,M_h)\),
\begin{eqnarray}
r_c(\mathbf{M})-r_c(\mathbf{M}_c^\star)
\ge
\frac{\mu}{2}
\sum_{i=1}^h
\left\|M_i-M_{i,c}^\star\right\|_F^2.
\end{eqnarray}
This condition states that the conditional risk increases at least
quadratically as the learned metric deviates from the context-specific optimal
metric.

\paragraph{Assumption 4.}
The context \(C\) is identifiable from the input \(X\), namely there exists a
measurable function \(q(\cdot)\) such that
\begin{eqnarray}
C=q(X).
\end{eqnarray}
Moreover, the DAMCHA metric generator is expressive enough to represent the
context-dependent mapping
\begin{eqnarray}
X\mapsto \mathbf{M}_{q(X)}^\star.
\end{eqnarray}
Assumption~4 formalizes the idea that the input contains sufficient information
to determine the appropriate local similarity geometry.
For standard MHA, the effective metric of each head is static and belongs to
the low-rank query-key factorization class
\begin{eqnarray}
\mathcal{L}_{d_h}
=
\left\{
AB^\top:
A,B\in\mathbb{R}^{d\times d_h}
\right\}.
\end{eqnarray}
Thus, the optimal static MHA risk is
\begin{eqnarray}
R_{\mathrm{MHA}}^\star
=
\inf_{M_i\in\mathcal{L}_{d_h}}
\sum_{c=1}^K
\pi_c r_c(M_1,\ldots,M_h).
\end{eqnarray}
In contrast, DAMCHA can use input-adaptive effective metrics
\begin{eqnarray}
\mathbf{M}(X)
=
\left(
\bar M_1(X),\ldots,\bar M_h(X)
\right),
\end{eqnarray}
where each \(\bar M_i(X)\) is generated through the cross-head metric matrix.
The optimal DAMCHA risk is defined as
\begin{eqnarray}
R_{\mathrm{DAMCHA}}^\star
=
\inf_{g}
\mathbb{E}
\left[
\ell(g(X);X,Y)
\right],
\end{eqnarray}
where \(g(X)\) ranges over the metric mappings representable by DAMCHA.

\begin{theorem}[Metric-Learning Risk Advantage of DAMCHA]
\label{thm:metric_advantage}
Suppose Assumptions~2--4 hold. Then
\begin{eqnarray}
R_{\mathrm{DAMCHA}}^\star
\le
R_{\mathrm{MHA}}^\star.
\end{eqnarray}
Moreover,
\begin{eqnarray}
R_{\mathrm{MHA}}^\star
-
R_{\mathrm{DAMCHA}}^\star
\ge
\frac{\mu}{2}
\inf_{M_i\in\mathcal{L}_{d_h}}
\sum_{c=1}^K
\pi_c
\sum_{i=1}^h
\left\|
M_i-M_{i,c}^\star
\right\|_F^2.
\end{eqnarray}
Consequently, if there does not exist a single static collection of low-rank
metrics \((M_1,\ldots,M_h)\), with \(M_i\in\mathcal{L}_{d_h}\), such that
\begin{eqnarray}
M_i=M_{i,c}^\star
\qquad
\text{for all } i=1,\ldots,h
\text{ and all } c=1,\ldots,K,
\end{eqnarray}
then
\begin{eqnarray}
R_{\mathrm{DAMCHA}}^\star
<
R_{\mathrm{MHA}}^\star.
\end{eqnarray}
\end{theorem}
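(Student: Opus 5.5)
The plan is to compute $R_{\mathrm{DAMCHA}}^\star$ exactly and then bound $R_{\mathrm{MHA}}^\star$ from below context by context using Assumption~3.

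\textbf{Step 1: the DAMCHA risk.} By Assumption~4 the mapping $g^\star(X)=\mathbf{M}^\star_{q(X)}$ is representable by DAMCHA. Conditioning on $C$ and using $C=q(X)$ gives
\[
R_{\mathrm{DAMCHA}}^\star\le \mathbb{E}\bigl[\ell(g^\star(X);X,Y)\bigr]=\sum_{c=1}^K\pi_c\, r_c(\mathbf{M}_c^\star).
\]
The step that needs care is the observation that, on the event $\{C=c\}$, the random metric $g^\star(X)$ equals the constant $\mathbf{M}_c^\star$. The conditional expectation is therefore exactly $r_c(\mathbf{M}_c^\star)$. Since $r_c$ is defined only for static arguments, this is precisely where identifiability is used. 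No matching lower bound is needed, because only an upper bound on $R_{\mathrm{DAMCHA}}^\star$ enters the comparison.

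\textbf{Step 2: the MHA risk.} For any static tuple $(M_1,\dots,M_h)$ with $M_i\in\mathcal{L}_{d_h}$, apply Assumption~3 in each context:
\[
\sum_c\pi_c r_c(M_1,\dots,M_h)\ \ge\ \sum_c\pi_c r_c(\mathbf{M}_c^\star)+\frac{\mu}{2}\sum_c\pi_c\sum_i\|M_i-M_{i,c}^\star\|_F^2 .
\]
Taking the infimum over $\mathcal{L}_{d_h}^h$ on both sides, the first term is constant, so
\[
R^\star_{\mathrm{MHA}}\ge \sum_c\pi_c r_c(\mathbf{M}_c^\star)+\frac{\mu}{2}\,D,
\qquad
D:=\inf_{M_i\in\mathcal{L}_{d_h}}\sum_c\pi_c\sum_i\|M_i-M_{i,c}^\star\|_F^2 .
\]
Subtracting Step~1 yields the displayed gap inequality. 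Because $D\ge 0$, the first claim $R_{\mathrm{DAMCHA}}^\star\le R_{\mathrm{MHA}}^\star$ follows as well.

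\textbf{Step 3: strictness.} It remains to show $D>0$ whenever no static low-rank tuple matches every $M_{i,c}^\star$. This is the main obstacle: the infimum over $\mathcal{L}_{d_h}$ might not be attained, and a sequence of tuples could drive the objective to zero without any single tuple matching.

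To handle this, I will use that $\mathcal{L}_{d_h}$ is the set of $d\times d$ matrices of rank at most $d_h$, which is closed because rank is lower semicontinuous. The objective is continuous and coercive, so a minimizing sequence stays bounded. Its limit point lies in $\mathcal{L}_{d_h}^h$ and attains $D$.

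Suppose $D=0$. Since every $\pi_c>0$, the minimizer must satisfy $M_i=M_{i,c}^\star$ for all $i$ and all $c$. This contradicts the hypothesis, so $D>0$, and the strict inequality $R_{\mathrm{DAMCHA}}^\star<R_{\mathrm{MHA}}^\star$ follows.

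Assumption~2 is used only to make $\mathbf{M}_c^\star$ well defined as the minimizer appearing in Assumption~3.
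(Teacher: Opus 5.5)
Your proposal is correct and follows the paper's proof: the oracle map $g^\star(X)=\mathbf{M}^\star_{q(X)}$ gives $R^\star_{\mathrm{DAMCHA}}\le\sum_c\pi_c r_c(\mathbf{M}^\star_c)$, and applying Assumption~3 context by context and then taking the infimum over $\mathcal{L}_{d_h}$ yields the gap bound and hence $R^\star_{\mathrm{DAMCHA}}\le R^\star_{\mathrm{MHA}}$. Your Step~3 (the rank-$\le d_h$ set is closed and the objective is coercive, so the infimum is attained) adds something the paper lacks, since the paper only asserts that the infimum is positive when no static low-rank tuple matches all $\mathbf{M}^\star_c$.
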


\begin{proof}
We first characterize the best risk attainable by an input-adaptive metric.
By Assumption~4, DAMCHA can represent the mapping $g^\star(X)=\mathbf{M}_{q(X)}^\star$.
Since \(C=q(X)\), this means that DAMCHA can select
$g^\star(X)=\mathbf{M}_{C}^\star$ for each input. Therefore, the risk achieved by this oracle adaptive metric is
\begin{eqnarray}
\mathbb{E}
\left[
\ell(g^\star(X);X,Y)
\right]
=
\sum_{c=1}^K
\pi_c
\mathbb{E}
\left[
\ell(\mathbf{M}_c^\star;X,Y)
\mid C=c
\right].
\end{eqnarray}
By the definition of \(r_c(\cdot)\), this becomes
\begin{eqnarray}
\mathbb{E}
\left[
\ell(g^\star(X);X,Y)
\right]
=
\sum_{c=1}^K
\pi_c r_c(\mathbf{M}_c^\star).
\end{eqnarray}

Since $R_{\mathrm{DAMCHA}}^\star$ is the infimum over all
DAMCHA-representable mappings, and $g^\star$ is one such mapping,
we have
\begin{eqnarray}
R_{\mathrm{DAMCHA}}^\star \;\leq\;
\mathbb{E}\bigl[\ell(g^\star(X);\,X,Y)\bigr]
= \sum_{c=1}^K \pi_c\, r_c(\mathbf{M}_c^\star).
\end{eqnarray}

Now consider standard MHA. It must use a static collection of metrics
\begin{eqnarray}
\mathbf{M}=(M_1,\ldots,M_h),
\qquad
M_i\in\mathcal{L}_{d_h},
\end{eqnarray}
independent of the input context. For any such \(\mathbf{M}\), its excess risk
over the adaptive oracle is
\begin{eqnarray}
\sum_{c=1}^K
\pi_c r_c(\mathbf{M})
-
\sum_{c=1}^K
\pi_c r_c(\mathbf{M}_c^\star)
=
\sum_{c=1}^K
\pi_c
\left[
r_c(\mathbf{M})-r_c(\mathbf{M}_c^\star)
\right].
\end{eqnarray}
By Assumption~3,
\begin{eqnarray}
r_c(\mathbf{M})-r_c(\mathbf{M}_c^\star)
\ge
\frac{\mu}{2}
\sum_{i=1}^h
\left\|
M_i-M_{i,c}^\star
\right\|_F^2.
\end{eqnarray}
Therefore,
\begin{eqnarray}
\sum_{c=1}^K
\pi_c r_c(\mathbf{M})
-
\sum_{c=1}^K
\pi_c r_c(\mathbf{M}_c^\star)
\ge
\frac{\mu}{2}
\sum_{c=1}^K
\pi_c
\sum_{i=1}^h
\left\|
M_i-M_{i,c}^\star
\right\|_F^2.
\end{eqnarray}
Taking the infimum over all static MHA metrics
\(M_i\in\mathcal{L}_{d_h}\), we obtain
\begin{eqnarray}
R_{\mathrm{MHA}}^\star
-
R_{\mathrm{DAMCHA}}^\star
\ge
\frac{\mu}{2}
\inf_{M_i\in\mathcal{L}_{d_h}}
\sum_{c=1}^K
\pi_c
\sum_{i=1}^h
\left\|
M_i-M_{i,c}^\star
\right\|_F^2.
\end{eqnarray}

The right-hand side is nonnegative, which proves
$R_{\mathrm{DAMCHA}}^\star
\le
R_{\mathrm{MHA}}^\star$. Furthermore, if no single static low-rank collection
\((M_1,\ldots,M_h)\) can simultaneously match all context-specific optimal
metrics \(\mathbf{M}_c^\star\), then
\begin{eqnarray}
\inf_{M_i\in\mathcal{L}_{d_h}}
\sum_{c=1}^K
\pi_c
\sum_{i=1}^h
\left\|
M_i-M_{i,c}^\star
\right\|_F^2
>
0.
\end{eqnarray}
Therefore, we have
$R_{\mathrm{MHA}}^\star-R_{\mathrm{DAMCHA}}^\star>0,$
which implies
$R_{\mathrm{DAMCHA}}^\star<
R_{\mathrm{MHA}}^\star.$
\end{proof}
Theorem~\ref{thm:metric_advantage} provides a metric-learning explanation for
the advantage of DAMCHA. Standard MHA learns a single static metric shared by
all inputs. When the data distribution contains multiple contexts with different
optimal similarity geometries, this static metric must compromise among them
and therefore suffers from an irreducible metric mismatch:
\begin{eqnarray}
\inf_{M_i\in\mathcal{L}_{d_h}}
\sum_{c=1}^K
\pi_c
\sum_{i=1}^h
\left\|
M_i-M_{i,c}^\star
\right\|_F^2.
\end{eqnarray}
In contrast, DAMCHA can condition its cross-head metric matrix on the input and
select a context-specific effective metric. Thus, DAMCHA behaves as an
input-adaptive local metric learner, whereas MHA behaves as a static global
metric learner.

\section{Computational Complexity Analysis of DAMCHA}
\label{app:complexity}
\subsection{Notation}

Throughout this section we use the following symbols consistently.

\begin{center}
\begin{tabular}{cl}
\hline
Symbol & Meaning \\
\hline
$L$   & Sequence (context) length \\
$d$   & Model dimension $d_{\mathrm{model}}$ \\
$h$   & Number of attention heads \\
$d_h$ & Per-head dimension, $d_h = d / h$ \\
$d_v$ & Per-head value dimension (set equal to $d_h$ unless noted) \\
$P$   & Depth (number of hidden layers) of the metric generator $f_\theta$ \\
$w$   & Hidden width of $f_\theta$ \\
$N$   & Number of stacked Transformer layers (depth of the network) \\
\hline
\end{tabular}
\end{center}

\noindent
All asymptotic statements use the standard $\mathcal{O}(\cdot)$ notation and suppress
constant factors.  We analyse one DAMCHA attention layer and one standard MHA
layer side-by-side, then aggregate to the full $N$-layer stack.

\subsection{Per-Layer Time Complexity}

A single DAMCHA attention layer consists of five distinct computational stages.

\paragraph{Stage 1: Metric generator $f_\theta(X)$.}
Given the input sequence $X \in \mathbb{R}^{L \times d}$, the metric generator first
computes a context summary $\bar{x} = \tfrac{1}{L}\sum_{\ell=1}^{L} X_\ell \in \mathbb{R}^{d}$
via mean pooling, at cost $\mathcal{O}(Ld)$.  The resulting vector is passed through
a $P$-layer MLP with hidden width $w$:
\begin{equation*}
  f_\theta:\; \mathbb{R}^{d}
  \;\xrightarrow{\;d \to w\;}\;
  \bigl(\mathbb{R}^{w}\bigr)^{P-2}
  \;\xrightarrow{\;w \to d^{2}\;}\;
  \mathbb{R}^{d^{2}},
\end{equation*}
producing the flattened cross-head metric matrix $M(X) \in \mathbb{R}^{d \times d}$.
The dominant costs are:
\begin{align*}
  \text{Input projection}  &:\quad \mathcal{O}(d \cdot w), \\
  (P-2) \text{ hidden layers} &:\quad \mathcal{O}((P-2)\,w^{2}), \\
  \text{Output projection} &:\quad \mathcal{O}(w \cdot d^{2}).
\end{align*}
Summing all stages of $f_\theta$ (including the pooling step):
\begin{equation}
  T_{f_\theta} = \mathcal{O}\!\left(Ld + Pw^{2} + wd^{2}\right).
  \label{eq:cost_fg}
\end{equation}
Crucially, under stack-wise parameter sharing $f_\theta$ is shared across all $N$
layers and need only be evaluated \emph{once} per forward pass, so its amortised
per-layer cost is $T_{f_\theta}/N$.

\paragraph{Stage 2: RowSum aggregation.}
The cross-head metric $M(X)$ is an $h \times h$ block matrix with blocks
$M_{i,j}(X) \in \mathbb{R}^{d_h \times d_h}$.  For each query head $i$ we compute
\begin{equation*}
  m_i(X) = \sum_{j=1}^{h} M_{i,j}(X) \;\in\; \mathbb{R}^{d_h \times d_h}.
\end{equation*}
This requires summing $h$ matrices of size $d_h \times d_h$ for each of $h$ heads:
\begin{equation}
  T_{\mathrm{RowSum}}
  = \mathcal{O}\!\left(h^{2} d_h^{2}\right)
  = \mathcal{O}\!\left(d^{2}\right),
\end{equation}
since $h \cdot d_h = d$.

\paragraph{Stage 3: Attention score computation.}
For each head $i$, let $X_i = X[\,:\,,\, i d_h : (i{+}1)d_h] \in \mathbb{R}^{L \times d_h}$
be the $i$-th positional slice of the input.  The pre-softmax score matrix is
\begin{equation*}
  S_i(X) = \frac{1}{\sqrt{d_k}}\, X_i\, m_i(X)\, X_i^{\top} \;\in\; \mathbb{R}^{L \times L}.
\end{equation*}
The two matrix products cost:
\begin{align*}
  X_i m_i(X) &:\quad \mathcal{O}(L d_h^{2}), \\
  \bigl(X_i m_i(X)\bigr) X_i^{\top} &:\quad \mathcal{O}(L^{2} d_h).
\end{align*}
Summing over all $h$ heads:
\begin{equation}
  T_{\mathrm{score}}
  = \mathcal{O}\!\left(h \bigl(L d_h^{2} + L^{2} d_h\bigr)\right)
  = \mathcal{O}\!\left(\frac{Ld^{2}}{h} + L^{2}d\right).
\end{equation}

\paragraph{Stage 4: Softmax.}
Row-wise softmax over $h$ score matrices of size $L \times L$:
\begin{equation}
  T_{\mathrm{softmax}} = \mathcal{O}(hL^{2}).
\end{equation}

\paragraph{Stage 5: Value projection and output.}
This stage is structurally identical to standard MHA.  Value projections
$V_i = X W_V^{(i)}$ cost $\mathcal{O}(L d d_v)$ per head; the attention-weighted
sum $A_i V_i$ costs $\mathcal{O}(L^{2} d_v)$ per head; and the output projection
$W_O$ costs $\mathcal{O}(L d^{2})$.  With $d_v = d_h$ and summing over heads:
\begin{equation}
  T_{\mathrm{value}} = \mathcal{O}(L d^{2} + L^{2} d).
\end{equation}

\paragraph{Total per-layer cost of DAMCHA.}
Combining the five stages, and writing $T_{f_\theta}^{\mathrm{amort}} =
T_{f_\theta}/N$ for the shared generator:
\begin{equation}
  T_{\mathrm{DAMCHA}}^{\mathrm{layer}}
  = \mathcal{O}\!\left(\frac{Ld^{2}}{h} + L^{2}d\right)
  + \frac{1}{N}\,\mathcal{O}\!\left(Ld + Pw^{2} + wd^{2}\right).
  \label{eq:damcha_total}
\end{equation}

\subsection{Comparison with Standard MHA}

Standard MHA per layer involves:
\begin{align*}
  \text{Q, K projections (per head, }h\text{ heads)} &:\quad \mathcal{O}(Ld \cdot d_h \cdot 2h) = \mathcal{O}(Ld^{2}), \\
  \text{V projection} &:\quad \mathcal{O}(Ld^{2}), \\
  QK^{\top}\text{ (per head)} &:\quad \mathcal{O}(L^{2}d_h \cdot h) = \mathcal{O}(L^{2}d), \\
  \text{Softmax} &:\quad \mathcal{O}(hL^{2}), \\
  \text{Output projection} &:\quad \mathcal{O}(Ld^{2}).
\end{align*}
\begin{equation}
  T_{\mathrm{MHA}}^{\mathrm{layer}} = \mathcal{O}(Ld^{2} + L^{2}d).
\end{equation}

\begin{table}[h]
\centering
\caption{Per-layer time complexity: DAMCHA vs.\ standard MHA. $\dagger$ indicates
the stage is shared across all $N$ layers and amortises to $\mathcal{O}(\cdot)/N$.}
\label{tab:complexity_compare}
\begin{tabular}{lccc}
\hline
Stage & MHA & DAMCHA \\
\hline
Q, K projections   & $\mathcal{O}(Ld^{2})$ & --- (replaced by $f_\theta$) \\
Metric generator $f_\theta$$\dagger$ & --- & $\mathcal{O}(Ld + Pw^{2} + wd^{2})$ \\
RowSum aggregation & --- & $\mathcal{O}(d^{2})$ \\
Attention scores   & $\mathcal{O}(Ld^{2} + L^{2}d)$ & $\mathcal{O}(Ld^{2}/h + L^{2}d)$ \\
Softmax            & $\mathcal{O}(hL^{2})$ & $\mathcal{O}(hL^{2})$ \\
V projection + output & $\mathcal{O}(Ld^{2} + L^{2}d)$ & $\mathcal{O}(Ld^{2} + L^{2}d)$ \\
\hline
\textbf{Total (per layer)} & $\mathcal{O}(Ld^{2} + L^{2}d)$ &
  $\mathcal{O}(Ld^{2}/h + L^{2}d)$ + amort.\ $f_\theta$ \\
\hline
\end{tabular}
\end{table}

Table~\ref{tab:complexity_compare} summarises the comparison.
Three observations follow directly from this comparison.

\textbf{Score computation is cheaper.}  The dominant cost of attention score
formation in MHA is $\mathcal{O}(Ld^{2})$ (from Q, K projections plus
$QK^{\top}$).  In DAMCHA, the corresponding cost is $\mathcal{O}(Ld^{2}/h)$,
a factor-of-$h$ saving, because $f_\theta$ directly outputs the effective metric
$m_i(X)$ without the $d \to d_h$ projection step.

\textbf{Overhead of $f_\theta$ is amortised.}  Under stack-wise parameter sharing,
$f_\theta$ is invoked once per forward pass regardless of $N$.  As $N$ grows
(the typical setting for deep Transformers), its per-layer contribution tends to
zero.

\subsection{Parameter Count}

Let $\Theta(\cdot)$ denote the exact parameter count.

\paragraph{Standard MHA.}  Each layer contains projections
$W_Q^{(i)}, W_K^{(i)}, W_V^{(i)} \in \mathbb{R}^{d \times d_h}$ for all $h$ heads
and an output projection $W_O \in \mathbb{R}^{d \times d}$:
\begin{equation}
  \Theta_{\mathrm{MHA}}^{\text{layer}} = 3h \cdot d \cdot d_h + d^{2}
  = 3d^{2} + d^{2} = 4d^{2}.
\end{equation}
For an $N$-layer stack: $\Theta_{\mathrm{MHA}} = 4Nd^{2}$.

\paragraph{DAMCHA.}  Each layer retains $W_V^{(i)}$ and $W_O$ but discards
$W_Q^{(i)}$ and $W_K^{(i)}$, which are superseded by $f_\theta$:
\begin{equation}
  \Theta_{\mathrm{DAMCHA}}^{\text{layer}} = h \cdot d \cdot d_h + d^{2} = 2d^{2}.
\end{equation}
The shared metric generator contributes a \emph{single} parameter block:
\begin{equation}
  \Theta_{f_\theta} = d \cdot w + (P-2)w^{2} + w \cdot d^{2}
  = \mathcal{O}(Pw^{2} + wd^{2}).
\end{equation}
Total for the $N$-layer stack:
\begin{equation}
  \Theta_{\mathrm{DAMCHA}} = 2Nd^{2} + \Theta_{f_\theta}.
\end{equation}

\paragraph{Net saving.}
\begin{equation}
  \Delta\Theta = \Theta_{\mathrm{MHA}} - \Theta_{\mathrm{DAMCHA}}
  = 2Nd^{2} - \Theta_{f_\theta}.
\end{equation}
This saving is strictly positive whenever
$2Nd^{2} > \Theta_{f_\theta}$, i.e.\
$N > \Theta_{f_\theta} / (2d^{2}) \approx Pw^{2}/(2d^{2}) + w/2$.
For the GPT-2 scale configuration ($N=12$, $d=768$, $w=1536$, $P=9$) the
right-hand side evaluates to approximately $12.8$, so DAMCHA has essentially the
same parameter count as MHA at this scale and fewer parameters for larger $N$.
Concretely, DAMCHA achieves a $\sim 25\%$ reduction in attention-module parameters
relative to MHA when $N \geq 15$ and $w \approx 2d$.

% \subsection{Full-Stack Complexity Summary}

% Table~\ref{tab:fullstack} consolidates all results for an $N$-layer Transformer.

% \begin{table}[h]
% \centering
% \caption{Full $N$-layer stack complexity.  $f_\theta$ cost in DAMCHA is a
% one-time cost, not replicated per layer.}
% \label{tab:fullstack}
% \begin{tabular}{lcc}
% \hline
% Quantity & MHA ($N$ layers) & DAMCHA ($N$ layers) \\
% \hline
% Time (forward pass)
%   & $\mathcal{O}(N(Ld^{2} + L^{2}d))$
%   & $\mathcal{O}(N(Ld^{2}/h + L^{2}d)) + \mathcal{O}(Ld + Pw^{2} + wd^{2})$ \\
% Parameters
%   & $4Nd^{2}$
%   & $2Nd^{2} + \mathcal{O}(Pw^{2} + wd^{2})$ \\
% Activation memory
%   & $\mathcal{O}(N(hL^{2} + Ld))$
%   & $\mathcal{O}(N(hL^{2} + Ld)) + \mathcal{O}(d^{2})$ \\
% \hline
% \end{tabular}
% \end{table}

% \noindent
% The asymptotic complexity of DAMCHA is dominated by the same $\mathcal{O}(L^{2}d)$
% term as standard MHA (arising from the attention-weight computation that is
% unavoidable in any $L$-length self-attention mechanism), while the DAMCHA-specific
% score-formation cost $\mathcal{O}(Ld^{2}/h)$ is strictly lower than MHA's
% $\mathcal{O}(Ld^{2})$ counterpart.  Combined with the one-time amortised cost of
% $f_\theta$ and the flat $\mathcal{O}(d^{2})$ memory overhead, DAMCHA attains
% strictly greater representational capacity than MHA with comparable or reduced
% computational overhead across all three axes of complexity.

\section{Additional Distribution Plots for Sample Generation}
\label{app:distplots}

Figures~\ref{fig:app_dist_cifar10}--\ref{fig:app_dist_commongen} show that DAMCHA
consistently changes the \emph{shape} of the generation-score distributions rather
than only improving the average. Across datasets, DAMCHA tends to shift probability
mass toward the high-score region and reduce the density of low-score samples,
indicating fewer low-quality outliers. In addition, the KDE curves under DAMCHA are
typically more concentrated, suggesting improved stability (smaller dispersion) in
generation quality. These distribution-level patterns align with the main-text
quantitative results and provide evidence that the gains are not driven by a small
subset of samples.

\begin{figure}[h]
    \centering
    \includegraphics[width=0.75\textwidth]{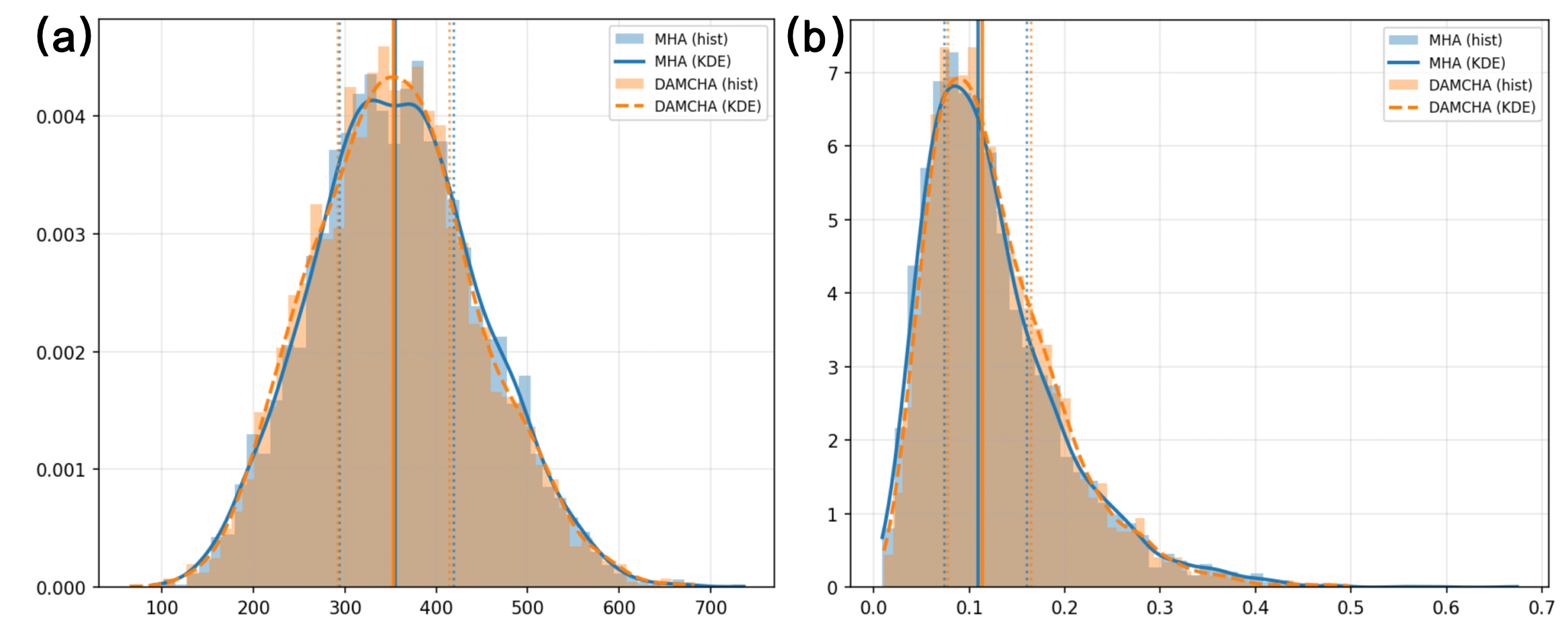}
    \caption{Supplementary distribution plot on CIFAR-100.}
    \label{fig:app_dist_cifar100}
\end{figure}

\begin{figure}[h]
    \centering
    \includegraphics[width=0.75\textwidth]{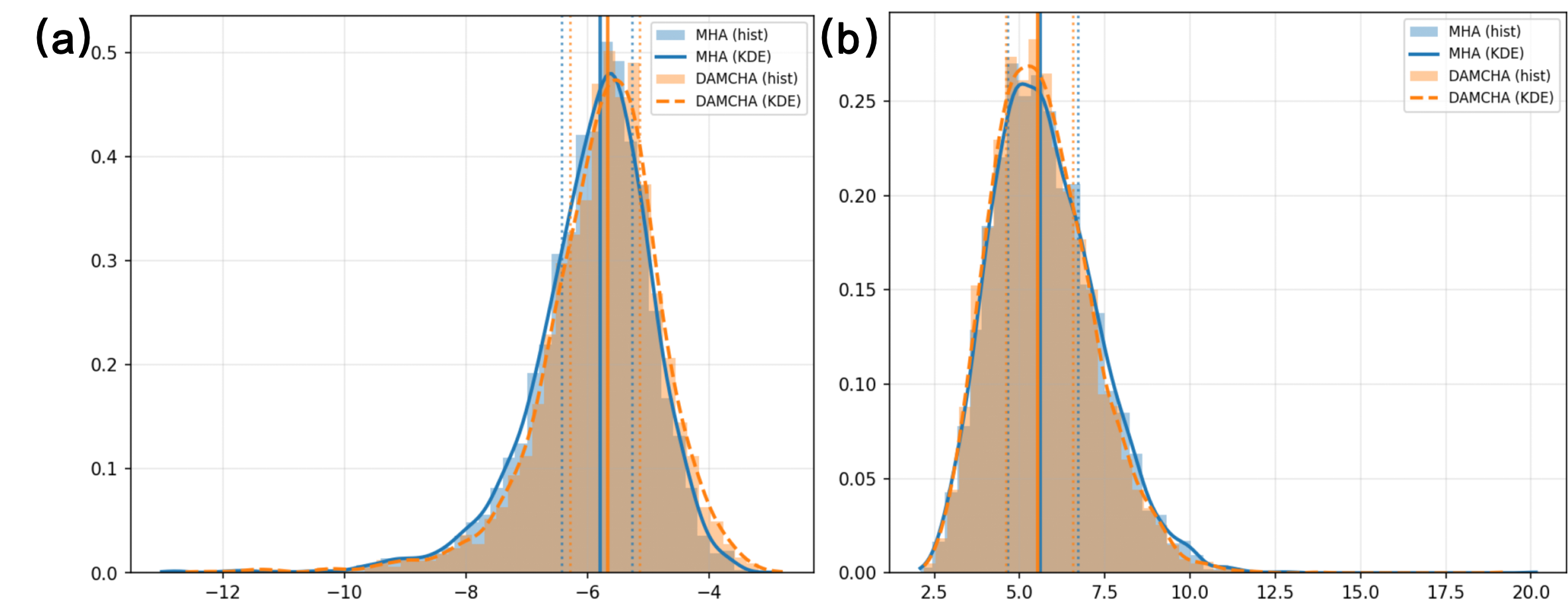}
    \caption{Supplementary distribution plot on CommonGen.}
    \label{fig:app_dist_commongen}
\end{figure}

\newpage

\section{The Evolution of Cross-Head Metric Matrix during training}
\label{app:Appendix F}

Figure~\ref{fig:cifar10_m_matrix} visualizes the learned cross-head metric matrices $M(X)$ at different training stages on CIFAR-10. We show four representative layers (layer-0, 1, 2, 3) at epoch 1 (top row) and epoch 50 (bottom row). At epoch 1, the matrices exhibit relatively uniform structures with weak off-diagonal interactions. By epoch 50, three distinct patterns emerge: 
\textbf{(1) Strong diagonal blocks:} The diagonal $8\times8$ blocks corresponding to $M_{i,i}(X)$ show high activation values, indicating well-learned intra-head similarities that capture head-specific feature patterns. 
\textbf{(2) Structured off-diagonal blocks:} Systematic cross-head interactions appear in the off-diagonal regions, particularly in middle layers (layer-1, layer-2). These blocks encode how queries from one head measure similarity under the feature patterns of another head, implementing the cross-head cooperation mechanism. 
\textbf{(3) Layer-specific specialization:} Early layers (layer-0, layer-1) develop denser and more diverse off-diagonal structures, suggesting active cross-modal feature integration in shallow representations. Deeper layers (layer-2, layer-3) maintain sparser but more targeted interactions, focusing on task-specific head coordination. This hierarchical specialization aligns with the intuition that early layers perform general feature extraction while deeper layers implement task-specific reasoning.

\begin{figure*}[h]
  \centering
  \includegraphics[width=\textwidth]{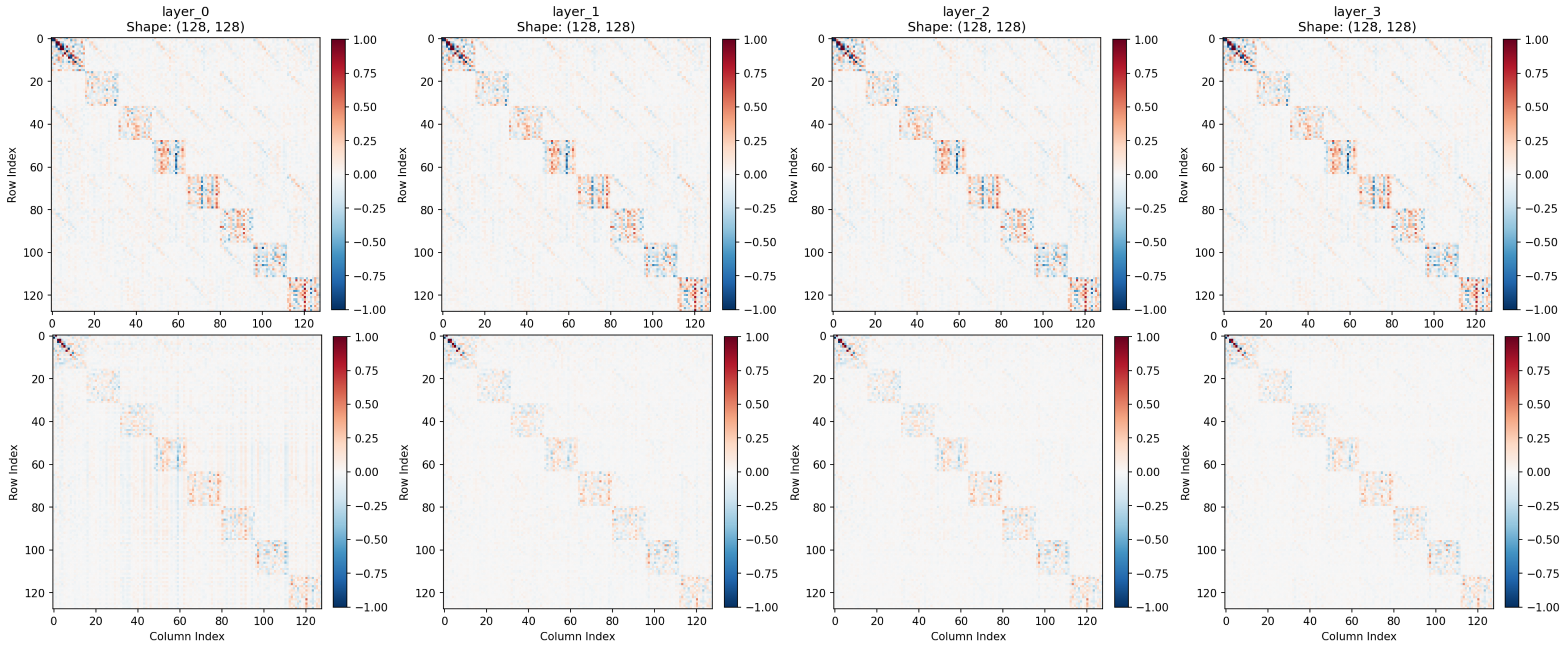}
  \caption{Cross-head metric matrix visualization on CIFAR-10. The top row shows the state of the cross-head metric matrix at epoch 1, and the bottom row shows the state at epoch 50.}
  \label{fig:cifar10_m_matrix}
\end{figure*}

% =========================
% Appendix B
% =========================

\section{Gradient Flow Analysis for Stack-Wise Sharing}
\label{app:Appendix G}

To understand how parameter sharing stabilizes training, Figure~\ref{fig:ablation_grad} visualizes the gradient contribution heatmaps of the cross-head metric matrix under two settings: (a) without sharing and (b) with the standard DAMCHA sharing mechanism. Each heatmap shows the $L_2$ norm of gradients flowing into each block of the metric matrix $M \in \mathbb{R}^{hd \times hd}$ during training.

Without sharing (panel a), gradients are distributed unevenly across the matrix, with some blocks receiving disproportionately large updates (bright regions) while others remain nearly inactive (dark regions). This uneven flow leads to unbalanced learning where certain cross-head interactions dominate while others fail to develop. In contrast, with parameter sharing (panel b), gradients distribute more uniformly across the entire matrix. The diagonal blocks maintain strong but controlled gradients, while off-diagonal blocks show moderate and consistent activation. This balanced gradient flow prevents any single layer from learning degenerate or overly specialized metrics, promoting cooperative learning across layers.

The visualization explains the performance gap in Table~\ref{tab:ablation}: shared parameters act as an implicit regularizer, forcing all layers to reuse a consistent family of metric transformations. This constraint prevents layer-specific overfitting and encourages the model to learn generalizable cross-head interaction patterns. Qualitative contour-map comparisons across epochs are included in Appendix~\ref{app:ablationviz}
(Fig.~\ref{fig:app_contour_ablation}), supporting the ablation findings.

\begin{figure}[h]
  \centering
  \includegraphics[width=0.8\textwidth]{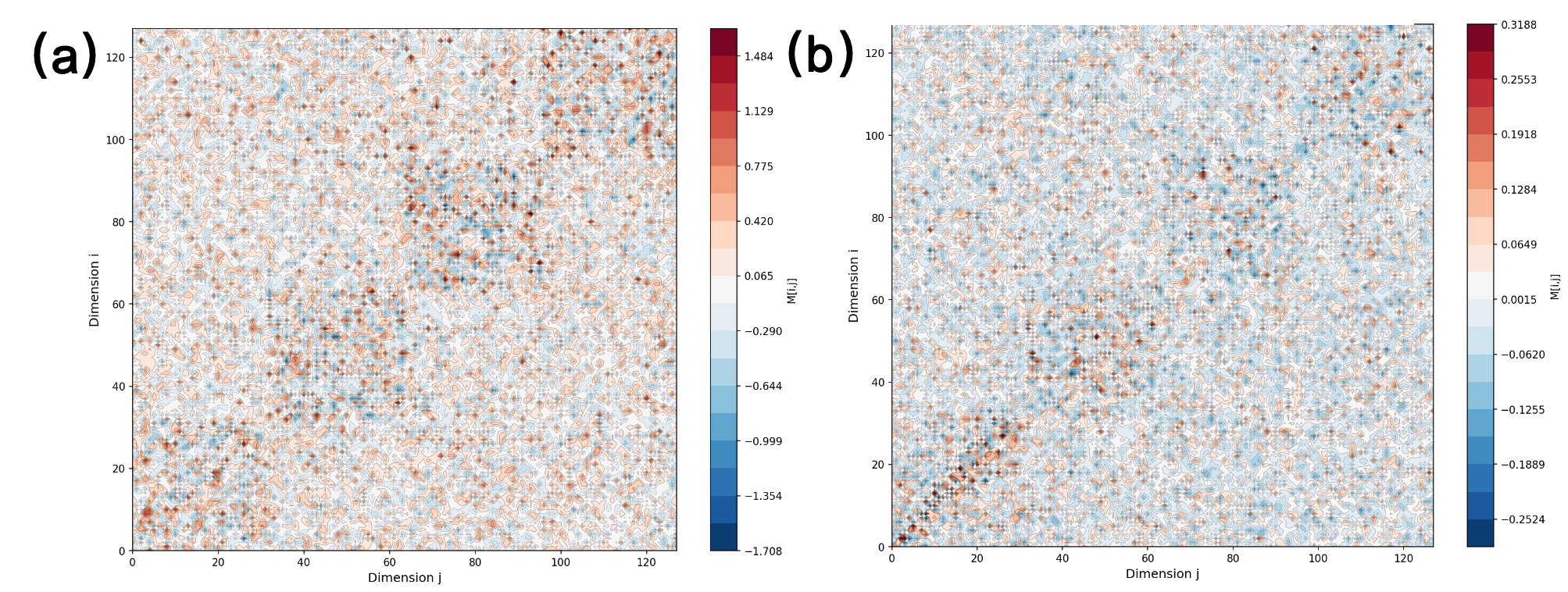}
  \caption{(a) Gradient contribution heatmap of the cross-head metric matrix when the sharing mechanism is disabled. (b) Gradient contribution heatmap of the cross-head metric matrix under the standard DAMCHA setting.}
  \label{fig:ablation_grad}
\end{figure}

\section{Additional Visualizations for Ablation and Metric Geometry}
\label{app:ablationviz}

Figure~\ref{fig:app_contour_ablation} provides a qualitative ablation comparison
across training epochs for WMT dataset. The w/o $f(x)$ variant tends to exhibit less coherent contour
structures and slower stabilization over epochs, whereas DAMCHA forms clearer and
more stable geometric patterns as training progresses. This supports the view that
the input-adaptive generator $f(x)$ helps shape a more structured and stable metric
geometry.

\begin{figure}[h]
    \centering
    \includegraphics[width=\textwidth]{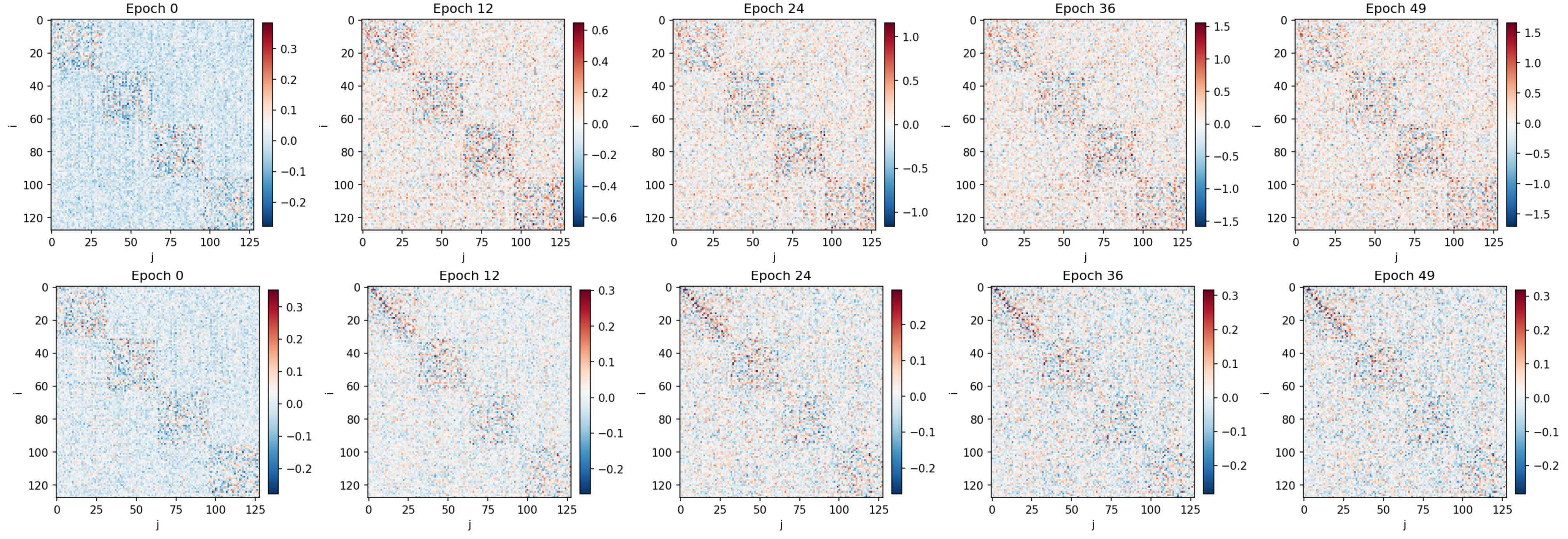}
    \caption{Contour-map visualizations across training epochs (Epoch 0/12/24/36/49). Top: w/o $f(x)$; Bottom: DAMCHA.}
    \label{fig:app_contour_ablation}
\end{figure}

Figure~\ref{fig:app_M_3d} further corroborates this observation from the perspective
of the learned metric matrix $M$. Compared with the static design (w/o $f(x)$), the
DAMCHA surface typically appears more organized (with smoother variations and fewer
irregular spikes), consistent with a better-conditioned metric landscape.

\begin{figure}[h]
    \centering
    \includegraphics[width=0.8\textwidth]{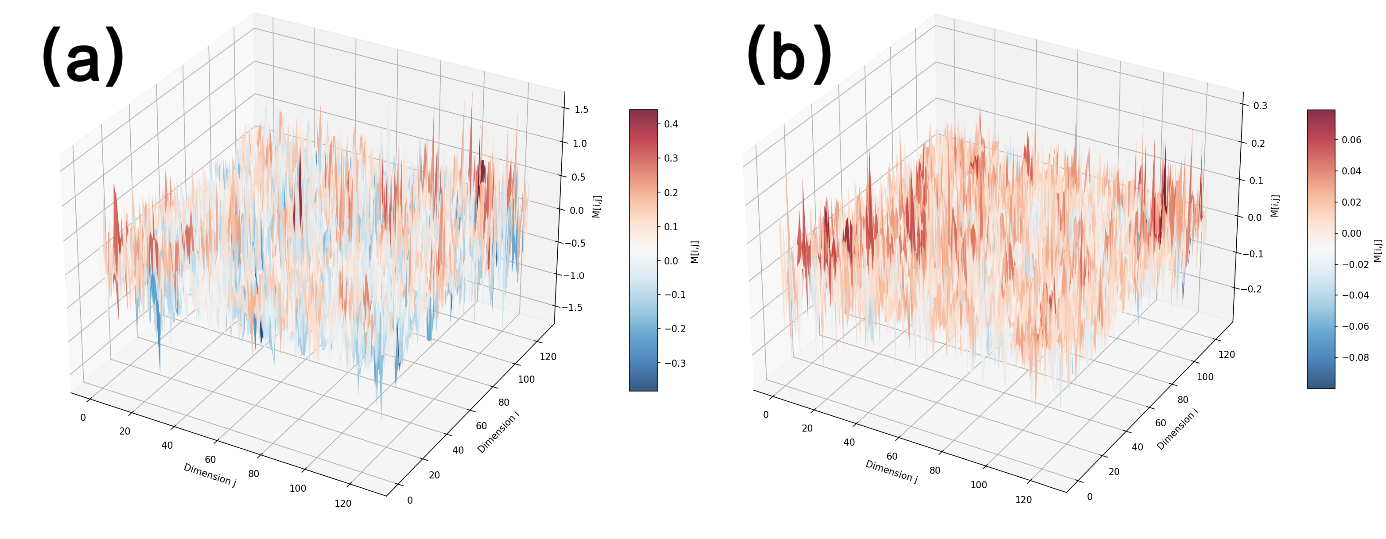}
    \caption{3D surface visualization of the learned metric matrix $M$. Left: w/o $f(x)$; Right: DAMCHA.}
    \label{fig:app_M_3d}
\end{figure}

\section{Synthetic Tasks}
\label{app:synthetic}
In this section, we design two synthetic tasks (regression and classification) to more explicitly present the advantages and underlying mechanism of our cross-head metric learning framework: modeling dependencies that require coordinated attention across multiple subspaces.
\par
We consider a feature space formed by 4 subspaces. Each subspace is constructed by a vector-valued variable, respectively, denoted by $\{\boldsymbol{x}_i\}_{i=1}^{4}$. Each $\boldsymbol{x}_i\in \mathbb{R}^{d_e}$ and its data is generated following $\boldsymbol{x}_i \stackrel{\text{i.i.d.}}{\sim} \mathcal{N}(\boldsymbol{0}, \boldsymbol{I}).$ We define the following 4-dimensional regression system
\begin{equation}
y_i = w_i^{\top} \boldsymbol{x}_i + \lambda \sum_{j \neq i} (\boldsymbol{x}_i^{\top} W_{ij} \boldsymbol{x}_j),
\end{equation}
where $w_i \in \mathbb{R}^{d_{e}}$ is a within-subspace weight vector, $W_{ij} \in \mathbb{R}^{d_{e} \times d_{e}}$ is a cross-subspace interaction matrix. All the weight parameters are initialized by sampling from $\mathcal{N}(0,0.25)$. When $\lambda = 0$, the task reduces to independent within-subspace prediction. 
As $\lambda$ increases, cross-subspace interactions become more critical, and DAMCHA's cross-head metric can provide advantages.

\par
For a $C$-classes problem, we define a target classification system. The score for class $c$ is computed as:
\begin{equation}
\text{score}_c = \sum_{i < j} (x_i^{\top} W_{ij}^c x_j),
\end{equation}
where $W_{ij}^c$ is a class-specific cross-subspace interaction matrix. $W_{ij}^c$ is constructed as: $W_{ij}^c = \text{scale} \cdot U \odot V$ where $U, V \in \mathbb{R}^{d_{e} \times d_{e}}$ are randomly initialized and $\text{Rank}(W_{ij}^c) = d_{e} / 2$. The final label is determined by $\text{label} = \arg\max_c (\text{score}_c + \text{noise})$ where noise is added for regularization.
$\lambda$ controls the entanglement level (strength of cross-subspace dependencies). 
\par
\begin{figure}[t]
  \centering
  \includegraphics[width=0.8\textwidth]{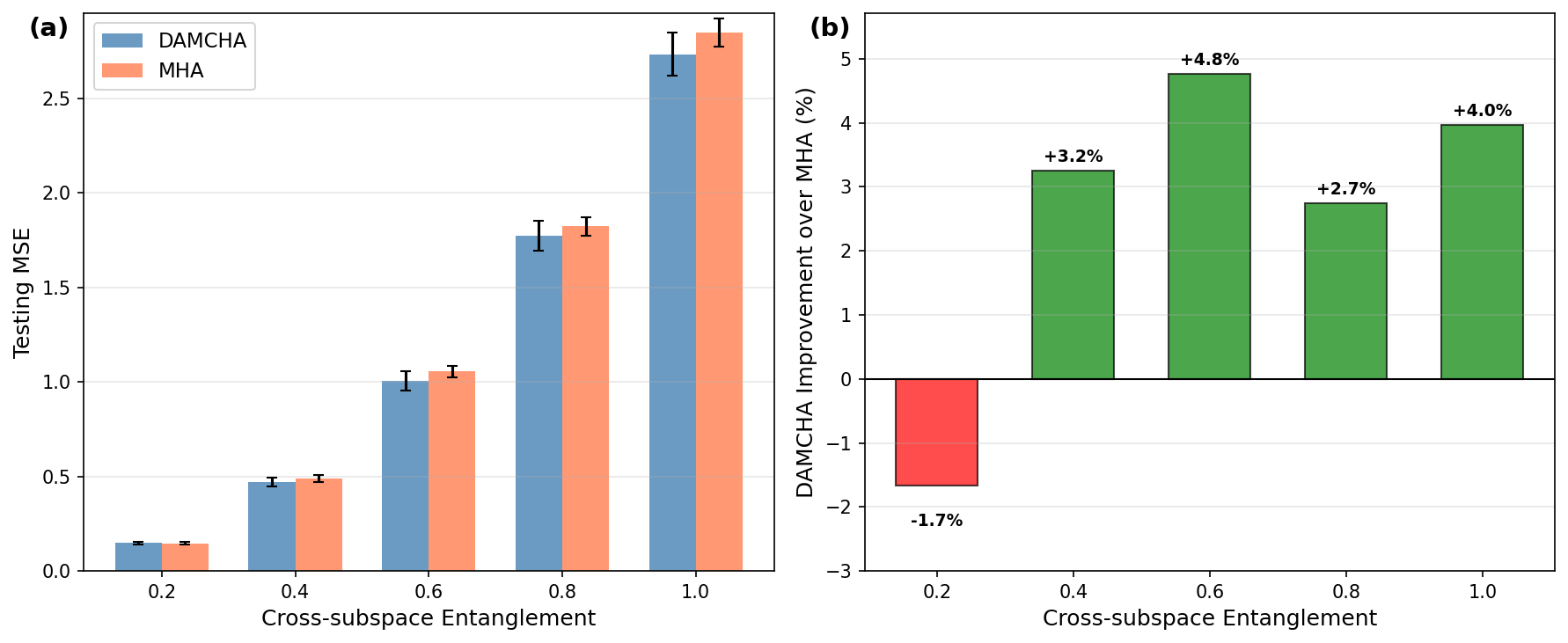}
  \caption{Cross-subspace regression: DAMCHA vs. MHA.
  (a) comparison results (testing MSE) under different entanglement levels.
  (b) DAMCHA's relative advantage under different entanglement levels. }
  \label{fig:cross_group_interaction}
  \vspace{-1em}
\end{figure}
For both systems, we train two 4-head Transformer models with head dimension 32. For the regression system, under $\lambda \in \{0.2, 0.4, 0.6, 0.8, 1.0\}$, the model is trained for 10K steps with batch size 64. The model for the classification system is trained with 14K steps and batch size 128.
\par
For regression task, we evaluate DAMCHA and MHA using testing mean square error (MSE). Figure \ref{fig:cross_group_interaction}(a) presents comparison results with error bars, which demonstrates clear advantages of DAMCHA.
Figure \ref{fig:cross_group_interaction}(b) reveals a more nuanced picture by plotting DAMCHA's relative advantage over MHA. At $\lambda = 0.2$, DAMCHA underperforms MHA by 1.7\%, when only limited cross-subspace interaction is needed, As $\lambda$ increases from 0.2 to 1, DAMCHA's advantage grows from $-1.7\%$ to $+4.8\%$, confirming that the benefit scales with the strength of subspace entanglement.

\par
\begin{figure}[t]
  \centering
  \includegraphics[width=0.8\textwidth]{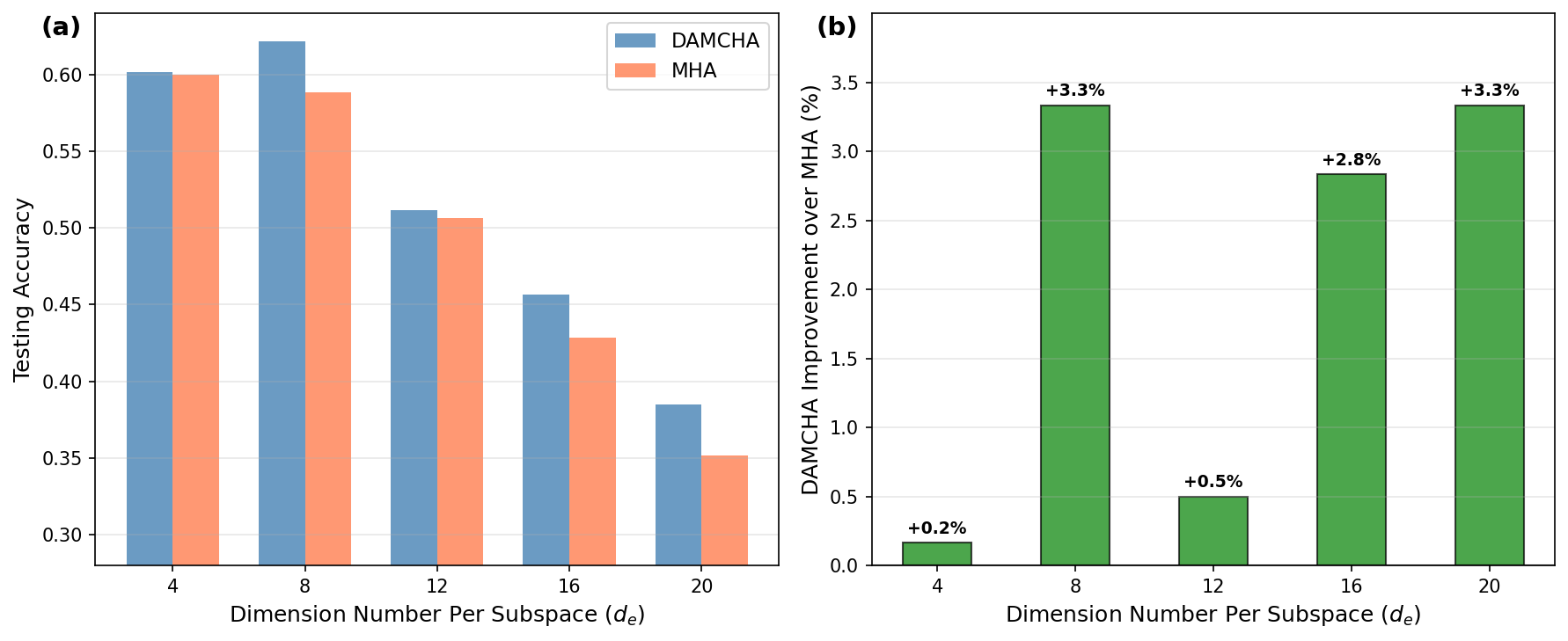}
  \caption{Cross-subspace classification: DAMCHA vs. MHA.
  (a) Comparison results (accuracy \%) under different $d_e$.
  (b) Relative improvement of DAMCHA over MHA across different $d_e$.}
  \label{fig:cross_group_classification}
  \vspace{-1em}
\end{figure}
For classification task, for comprehensive and robust evaluation, we vary $d_{e} \in \{4, 8, 12, 16, 20\}$,  resulting in total dimensions from 16 to 80. Figure \ref{fig:cross_group_classification}(a) provides accuracy comparison against $d_e$, visually confirming DAMCHA's consistent advantage across all settings. It also reveals an important scaling property: as $d_e$ increases, both methods degrade, but DAMCHA degrades more slowly. This implies that DAMCHA's cross-head coordination becomes increasingly valuable in high-dimensional settings where MHA's independent heads cannot effectively capture the full feature space interactions. Figure \ref{fig:cross_group_classification}(b) quantifies the relative improvement, further illustrating the advantages of DAMCHA.

% The presented two synthetic experiments establish clear conditions under which DAMCHA provides advantage: (i) when target dependencies involve interactions across feature subspaces assigned to different heads, (ii) when such cross-subspace interactions are sufficiently strong ($\lambda \geq 0.2$ in the regression system), and (iii) when feature dimensionality is moderate to high ($d_{e} \geq 8$ in the classification system). These findings explain DAMCHA's strong performance on real-world tasks (Sections 4.1-4.3) where complex, multi-faceted dependencies naturally arise. They also explain scenarios where DAMCHA's advantage is modest: when data is simple or when cross-head coordination is unnecessary, by simpler inductive bias, MHA may provide comparable generalization results.
The two synthetic experiments identify clear conditions under which DAMCHA provides advantage: (i) when the target dependencies explicitly involve interactions across feature subspaces that are assigned to different attention 
heads, (ii) such cross-subspace interactions are sufficiently strong (e.g., $\lambda \geq 0.2$ in the regression system), and (iii) the feature dimensionality is moderate to high (e.g., $d_e \geq 8$ in the classification system), where richer representational capacity becomes critical.
These controlled findings offer a principled explanation for DAMCHA's strong empirical performance on real-world benchmarks, where complex, multi-faceted dependencies naturally arise. In such settings, the ability to explicitly model cross-head interactions allows DAMCHA to capture dependencies.
When the underlying data-generating process is relatively simple, or when meaningful dependencies can be captured within individual heads without requiring cross-head coordination, standard MHA may already be sufficient, leading to comparable generalization performance. This observation highlights that DAMCHA is most beneficial in scenarios where cross-subspace interactions are intrinsic to the task, while remaining competitive in simpler regimes.

\section{Computational Resources and Runtime Details}
\label{app:compute}

All experiments in this paper were conducted on a single server with the following
hardware and software configuration:

\begin{itemize}
    \item \textbf{GPU:} NVIDIA RTX 3090 (24\,GB VRAM) $\times$ 1
    \item \textbf{CPU:} 12\,vCPU Intel(R) Xeon(R) Platinum 8255C @ 2.50\,GHz
    \item \textbf{Operating System:} Ubuntu 22.04
    \item \textbf{Python:} 3.12
    \item \textbf{PyTorch:} 2.7.0
    \item \textbf{CUDA:} 12.8
\end{itemize}

\textbf{Per-Experiment Runtime.}
Table~\ref{tab:runtime} summarizes the approximate wall-clock training time for each
experimental setting.

\begin{table}[h]
    \centering
    \small
    \caption{Approximate training time per experimental setting. All runs use a single
    RTX~3090 GPU.}
    \label{tab:runtime}
    \begin{tabular}{llrr}
        \toprule
        Experiment & Dataset & Steps / Epochs & Approx.\ Training Time \\
        \midrule
        \multirow{4}{*}{Decoder-only Transformer}
            & CIFAR-10    & 50 epochs  & $\sim$3\,h \\
            & CIFAR-100   & 50 epochs  & $\sim$3\,h \\
            & WMT14       & 50 epochs  & $\sim$5\,h \\
            & CommonGen   & 50 epochs  & $\sim$4\,h \\
        \midrule
        \multirow{3}{*}{GPT-2 Pretraining}
            & Enwik8      & 100K steps & $\sim$8\,h \\
            & Text8       & 100K steps & $\sim$8\,h \\
            & Mix         & 100K steps & $\sim$10\,h \\
        \midrule
        ViT-S/16 Classification & ImageNet-1K & 500 epochs & $\sim$72\,h \\
        \midrule
        Synthetic Tasks & Regression / Classification & 10K--14K steps & $<$30\,min \\
        \bottomrule
    \end{tabular}
\end{table}

\section{Comparison with Baseline Methods}
\label{app:comparison}
\paragraph{Mechanistic Advantages over Competing Attention Methods.}
From a theoretical standpoint, the advantages of DAMCHA over all baseline methods can be
understood through three independent but complementary mechanisms:
the \emph{structural expansion} of the metric feasible class,
the \emph{dynamic adaptivity} of the learned similarity geometry, and
the \emph{rank enhancement} induced by cross-head aggregation.
No existing baseline addresses all three simultaneously, and the gap between DAMCHA and each
competitor can be precisely traced back to which of these mechanisms it lacks.
At the empirical level, these distinctions consistently translate into superior performance
across all benchmark tasks, stemming from two fundamental innovations absent in every prior method:
\emph{(i)} learning the full cross-head metric matrix~$\mathbf{M}(X)$ that explicitly captures
inter-head interactions beyond the block-diagonal constraint of standard MHA, and
\emph{(ii)} parameterizing this metric as an input-dependent function, enabling the attention
geometry to dynamically adapt to the local manifold structure of the data at inference time.

\textbf{vs.\ MHA.}
Standard MHA imposes a block-diagonal restriction on the joint head-space metric,
structurally zeroing out all off-diagonal interactions, fixing the metric after training,
and capping the per-head rank at the projection dimension,
thereby simultaneously introducing all three deficiencies that DAMCHA is designed to resolve.
DAMCHA strictly generalizes MHA: MHA is exactly recoverable as a special case by setting
all off-diagonal metric blocks to zero and using a constant generator, while the converse
is provably false ($\mathcal{F}_{\text{MHA}} \subsetneq \mathcal{F}_{\text{DAMCHA}}$).
The resulting performance gap decomposes into two independent additive components:
a structural gap from the enlarged metric feasible class, and a dynamic gap from
input-adaptive metric generation, both of which are strictly positive under mild conditions
on the input distribution.
Concretely, MHA's static metric can at best approximate the average attention geometry
$\hat{M} \approx \mathbb{E}_X[M^*(X)]$, whereas DAMCHA's adaptive generator $f_\theta$
targets the pointwise optimum $M^*(X)$ for every input,
translating to a $31.1\%$ relative reduction in FID@Top-50 on CIFAR-10 and
a $35.7\%$ improvement in BARTScore on WMT.

\textbf{vs.\ THA.}
Talking-Heads Attention inserts learned linear projections across the head dimension
immediately before and after the softmax, allowing attention weights to be redistributed
across heads at the \emph{weight} level.
However, this operates entirely at the \emph{post-hoc aggregation} level:
the underlying per-head similarity metric $M_{i,i} = W_Q^{(i)}(W_K^{(i)})^\top$ remains
unchanged, static, and rank-limited.
From a metric-learning perspective, THA never modifies the similarity geometry itself;
its cross-head interaction is confined to a linear remixing of already-formed attention
patterns rather than a genuine coupling at the similarity-formation stage,
and THA therefore inherits all three limitations of MHA in terms of metric expressiveness.
DAMCHA, by contrast, intervenes directly at the metric level before any attention score
is computed, and this structural distinction is reflected empirically: THA yields worse
MSE, FID, and NLL than DAMCHA across all datasets while requiring substantially more
memory ($255.40$ vs.\ $204.70$ on CIFAR-10) and higher latency.

\textbf{vs.\ DCMHA.}
DCMHA introduces input-dependent composition of attention score and weight matrices via
a Compose function, making it the baseline that most closely approaches DAMCHA's dynamic
adaptivity.
The key theoretical distinction, however, is \emph{where} the input-dependence is introduced:
DCMHA's Compose function operates on scores already derived from fixed, low-rank per-head
bilinear forms, reweighting and mixing existing scores but never escaping the rank ceiling
$\operatorname{rank}(M_{i,i}) \le d_k$ imposed by the individual head projections.
DAMCHA's adaptivity is applied directly to the metric that \emph{generates} the scores,
meaning the attention geometry itself is data-dependent before any score is produced.
DCMHA can be understood as composing within a restricted space of attention patterns,
whereas DAMCHA reconfigures the similarity space from which all patterns are drawn.
The structural advantage of expressing full-rank, off-diagonal cross-head metrics therefore
persists even when both methods employ similarly expressive input-dependent generators.
Experimentally, DAMCHA outperforms DCMHA on all metrics in both the generation experiments
and the GPT-2 pretraining experiments, with cross-entropy improvements of
$1.90\%$--$3.70\%$ across Enwik8, Text8, and Mix, while also using less memory
($204.70$ vs.\ $292.71$ on CIFAR-10) and achieving lower latency.

\textbf{vs.\ ColMHA.}
Collaborative MHA addresses head redundancy by sharing key/query projections via Tucker
tensor decomposition, providing a compact reparametrization.
From a theoretical standpoint, however, the resulting model is \emph{entirely static}:
the shared projections are determined at training time and carry no mechanism for
input-dependent adaptation, keeping the metric class within a subset of fixed bilinear forms.
Moreover, the relief from the low-rank bottleneck that ColMHA obtains is a post-hoc
algebraic correction applied to a pre-trained model, not a structural property of the
architecture itself.
DAMCHA achieves rank enhancement organically through the RowSum aggregation of cross-head
metric blocks (with no reparametrization required) and simultaneously endows this richer
metric class with input-adaptive generation from the outset.
On both image and text generation tasks, DAMCHA achieves substantially better FID@Top-50
and BARTScore@Top-50 than ColMHA, confirming that static projection sharing cannot
substitute for dynamic cross-head metric learning.

\textbf{vs.\ MoA.}
Mixture of Attention Heads introduces token-level selectivity by routing each token
to a sparse subset of heads via a learned gating network, providing a limited form
of input-dependent adaptivity.
However, routing operates at the \emph{output} level by sparsely combining
independently formed, low-rank attention distributions; each selected head still
computes an independent Euclidean dot-product similarity within its own subspace,
and there is no path by which the similarity geometry of one head can be influenced
by or coordinated with that of another at the score-formation stage.
MoA therefore provides no mechanism for modeling metric-level interactions between heads,
which is precisely the coupling that DAMCHA encodes as a first-class structural property.
This theoretical limitation is compounded by a practical one: the sparsity required
by MoA incurs the highest memory footprint of all compared methods ($321.07$ on CIFAR-10
vs.\ $204.70$ for DAMCHA) and the highest GFLOPs ($0.1101$ vs.\ $0.0786$),
without commensurate performance gains.

\textbf{vs.\ MMA.}
Mixed Multi-Head Attention enforces head diversity by assigning fixed attention masks
(global, local, forward, and backward) to different heads.
From a theoretical perspective, this is a strict \emph{special case} of MHA:
the additional structural constraints on admissible attention patterns further
\emph{reduce} the effective function class rather than expanding it,
making MMA theoretically weaker than standard MHA in terms of metric expressiveness.
MMA's diversity is entirely predetermined and input-independent, and it does not
interact with the similarity metric at all; it merely restricts which positions
each head is permitted to attend to, without altering the underlying dot-product geometry.
This rigidity becomes a critical liability on tasks requiring fine-grained,
context-sensitive head coordination: while MMA achieves competitive NLL on WMT,
it degrades significantly on BARTScore@Top-50 and collapses on CommonGen,
precisely the settings where DAMCHA's adaptive cross-head metric excels.
% \newpage

% \input{checklist.tex}

\end{document}